\documentclass[11pt]{article}
\usepackage[T1]{fontenc}
\usepackage[square,sort,semicolon,numbers]{natbib}
\newcommand{\Pareto}{Pareto }
\newcommand{\mab}{MAB }
\newcommand{\SUBP}{Sub-Pareto }
\newcommand{\Subp}{sub-Pareto }

\newcommand{\alg}{\textsc{BL-Moss}}
\newcommand{\Alg}{\textsc{BL-Moss }}
\newcommand{\ucb}{\textsc{UCB1}}

\newcommand{\moss}{\textsc{Moss}}
\newcommand{\Moss}{\textsc{Moss }}

\newcommand{\Bmab}{BL-MAB}
\newcommand{\bmab} {BL-MAB }

\linespread{1.1}\vfuzz2pt \hfuzz2pt
\usepackage{amssymb,amsthm,amsmath,amssymb,wrapfig,dsfont,authblk}
\usepackage{thmtools,thm-restate}
\PassOptionsToPackage{hyphens}{url}
\usepackage{varioref}
\usepackage{verbatim}
\usepackage[usenames,svgnames,xcdraw,table]{xcolor}
\definecolor{DarkBlue}{rgb}{0.1,0.1,0.5}
\definecolor{DarkGreen}{rgb}{0.1,0.5,0.1}
\usepackage[backref=page]{hyperref}
\hypersetup{
     colorlinks   = true,
     linkcolor    = DarkBlue, 
     urlcolor     = DarkBlue, 
	 citecolor    = DarkGreen 
}
\renewcommand*{\backref}[1]{}
\renewcommand*{\backrefalt}[4]{%
    \ifcase #1 (Not cited.)%
    \or        (Cited on page~#2)%
    \else      (Cited on pages~#2)%
    \fi}
\usepackage[margin=0.9in]{geometry}
\usepackage{enumerate}
\usepackage{graphicx}
\usepackage{subdepth}
\usepackage{enumitem}
\usepackage{tcolorbox}
\usepackage[linesnumbered, ruled]{algorithm2e}
\usepackage{mathtools}
\usepackage{caption}
\usepackage{bbm}
\usepackage{soul}
\usepackage{mathrsfs,amsfonts,dsfont,authblk}
\usepackage{thmtools,thm-restate}
\usepackage[capitalise,noabbrev]{cleveref}
\usepackage[font=footnotesize,labelsep=newline,justification=centering]{caption}
\crefname{property}{Property}{Properties}
\usepackage{nicefrac}
\usepackage{multicol}
\usepackage{enumerate}
\usepackage{libertine}
\usepackage{wrapfig}
\usepackage[backgroundcolor=blue!3!white]{todonotes}
\usepackage[labelfont={normalfont,bf},textfont=it]{caption}
\usepackage{tikz}
\usetikzlibrary{positioning}
\usetikzlibrary{calc}
\newtheorem{theorem}{Theorem}

\newtheorem{lemma}{Lemma}

\theoremstyle{definition}

\newtheorem{observation}{Observation}
\newtheorem{claim}{Claim}
\newtheorem{definition}{Definition}

\theoremstyle{remark}

\usepackage{mdframed}

\usepackage{tikz}
\usetikzlibrary{shapes,arrows,decorations.markings}
\usepackage{titlesec}
\titlespacing*{\section}{0pt}{0.5cm}{0.10cm}
\titlespacing{\subsection}{0pt}{2pt}{1pt}
\parskip 1.20ex
\usepackage{lipsum}

\newcommand\blfootnote[1]{%
  \begingroup
  \renewcommand\thefootnote{}\footnote{#1}%
  \addtocounter{footnote}{-1}%
  \endgroup
}

\title{ Ballooning Multi-Armed Bandits }

\begin{document}
\author{Ganesh Ghalme    \thanks{Technion, Israel  Institute of Technology, Israel. \texttt{ganeshg@campus.technion.ac.il} } \quad Swapnil Dhamal\thanks{Chalmers University of Technology, Sweden.   \texttt{swapnil.dhamal@gmail.com}} \quad Shweta Jain \thanks{Indian Institute of Technology, Ropar, India. \texttt{shwetajains20@gmail.com}}\\  \quad Sujit Gujar \thanks{International Institute of Information
Technology. \texttt{sujit.gujar@iiit.ac.in}}  \quad Y. Narahari \thanks{Indian Institute of Science. \texttt{narahari@iisc.ac.in}}}
\date{ }
\maketitle
\blfootnote{$^{**}$A part of this work was done while the first author was at Indian Institute of Science, India.}

\begin{abstract}
 
In this paper, we introduce {\em ballooning multi-armed bandits\/} (\bmab), a novel extension of the classical stochastic MAB model. In the \bmab\  model, the set of available arms grows (or balloons) over time. In contrast to the classical MAB setting where the regret is computed with respect to the best arm overall, the regret in a \bmab\ setting is computed with respect to the best available arm at each time. We first observe that the existing stochastic MAB algorithms result in linear regret for the \bmab\ model. We prove that, if the best arm is equally likely to arrive at any time instant, a sub-linear regret cannot be achieved. Next, we show that if the best arm is more likely to arrive in the early rounds, one can achieve sub-linear regret.  Our proposed algorithm determines (1) the fraction of the time horizon for which the newly arriving arms should be explored and (2) the sequence of arm pulls in the exploitation phase from among the explored arms. Making reasonable assumptions on the arrival distribution of the best arm in terms of the thinness of the distribution's tail, we prove that the proposed algorithm achieves sub-linear instance-independent regret. We further quantify explicit dependence of regret on the arrival distribution parameters. We reinforce our theoretical findings with extensive simulation results. We conclude by showing that our algorithm would achieve sub-linear regret even if (a) the distributional parameters are not exactly known, but are obtained using a reasonable learning mechanism or (b) the best arm is not more likely to arrive early, but a large fraction of arms is likely to arrive relatively early.




\end{abstract}
\blfootnote{$\dagger \dagger$  An original version of this work is accepted for publication in the  Journal of Artificial Intelligence (AIJ) by Elsevier. A previous, preliminary version  of  this  paper  appeared as an extended abstract in proceedings of the 19th International Conference on Autonomous Agents and Multiagent Systems (2020), Auckland, New Zealand  \cite{ghalme2020ballooning}.}

\section{Introduction}
\label{sec:BL-intro}

The classical stochastic multi-armed bandit (MAB) problem provides an elegant abstraction to a number of important sequential decision making problems. In this setting, the planner chooses (or pulls) from a fixed pool of finitely many actions (i.e., arms), a single arm at each discrete time instant upto arbitrary time horizon. Each arm, when pulled, generates a reward from a fixed but a priori unknown stochastic distribution corresponding to the pulled arm. The planner's goal is to minimize the regret, that is, the loss incurred in the expected cumulative reward due to not knowing the reward distribution of the arms beforehand. The MAB problem encapsulates the classical exploration versus exploitation dilemma, in that the planner's algorithm has to arrive at an optimal trade-off between exploration (pulling relatively unexplored arms) and exploitation (pulling the best arms according to the history of pulls thus far). This  problem has been extensively studied in the literature. These studies include analysis of the achievable lower bound on regret~\cite{lai85},  bandit  algorithms~\cite{auer2010ucb,  thompson1933likelihood, garivier2011kl}, empirical studies~\cite{chapelle2011empirical, devanand2017empirical, russo2018tutorial}, and several extensions to the standard model~\cite{slivkins2019introduction, bubeck2012regret}. Many papers  show that some of the well known algorithms such as UCB1 \cite{auer2002finite} , \textsc{Thomson Sampling} \cite{agrawal2012analysis, kaufmann2012thompson}, \textsc{KL-UCB} \cite{garivier2011kl}  are known to attain asymptotically optimal regret guarantee upto a problem-dependent constant.  The above  list is  far from exhaustive. We refer the reader to \cite{slivkins2019introduction, lattimore2020bandit} for a book exposition on the topic.

The theoretical results in MAB are complemented by a wide variety of modern applications such as internet advertising~\cite{babaioff2009characterizing, nuara2018combinatorial}, crowdsourcing~\cite{JAIN2018}, clinical trials~\cite{villar2015multi}, and wireless communication~\cite{maghsudi2014joint}, which can be modeled in the MAB setup.  Due to a wide range of  applications and an elegant theoretical foundation, several  variants of the MAB problem have been proposed. Contributing to the long line of work that studies different variants of bandits, in this paper, we introduce a novel variant of MAB  which we call {\em Ballooning multi-armed bandits\/} (\bmab). In contrast to the classical MAB where the set of available arms is fixed throughout the run of an algorithm, the set of arms in \bmab\ grows (or balloons) over time.

 To see that the traditional algorithms are not regret-optimal in the \bmab\ setting, consider the following thought experiment. Let a new arm arrive at each time instant in decreasing order of mean rewards and let the MAB algorithm run for a total of $T$ time instants. The traditional MAB algorithms (such as \ucb, \moss, etc.) would pull the newly arrived arm at each time instant, thus incurring a regret of $O(T)$. 
 Note in the above example that even while the best arm appeared at the first time instant itself, traditional algorithms end up pulling all other arms at least once, which leads to a high regret. As the set of available arms expands over time, the traditional algorithms could not sufficiently explore each of the arms to identify the best arm.  Also, note that the regret in \bmab\  depends not only on the mean reward  of the arms, but also on when they arrive. Hence, any \bmab\ algorithm ought to be aware of the arrival of the arms. 
 
 It is clear from the above example that traditional algorithms cannot provide sublinear regret guarantees in \bmab\ setup as there is not enough time spared for  exploitation. Further, as the number of arms increases (potentially linearly) with time, an optimal algorithm must ignore (or drop)  a few arms. Hence, in addition to achieving an optimal trade-off between the number of exploratory pulls and exploitative pulls, the algorithm must also ensure that it does not drop  too many (or too few) arms.
 
\subsection{Motivation}

The \bmab\ framework is directly applicable in any scenario where the set of options grows over time, and, the objective is to choose the best option available at any given time. We motivate the practical significance of \bmab\  with a few applications.

A contemporary example is provided by  question and answer (Q\&A) platforms such as  Reddit, Stack Overflow, Quora, Yahoo! Answers, and ResearchGate, where for a given question,  the platform's goal is to discover the highest quality answer that should be displayed in the most prominent slot.
Each answer post is modeled as a distinct arm of a \bmab\ instance, and the rewards are distributed according to a Bernoulli distribution parameterized  by the quality of the posted answer. Note that this quality is a priori unknown to the platform and hence needs to be learnt. For this, the platform employs certain endorsement mechanisms with indicators such as upvotes, likes, and shares (or re-posts). If a user likes the answer displayed to her, then she may endorse the answer.
Each display of a posted answer corresponds to a pull of the corresponding arm.  At each time instant, a new user observes the existing answer posts shown by the platform, and decides whether to endorse them. Further, the user may also choose to  post her own answer, thus increasing the number of available arms. Hence, the number of available arms (answers)  monotonically increases over time.

The problem of learning qualities of the answers on Q\&A forums has been modeled under the MAB framework in various studies~\cite{ghosh2013learning, tang2019bandit, LIU18}. However, these studies resort to the existing MAB variations which are not well suited for Q\&A forums. For instance, in~\cite{ghosh2013learning},  the problem is modeled with a classical MAB framework by limiting the number of arms via strategic choice of an agent, by assuming that a user incurs a certain cost for posting an answer and hence posts it only if she derives a positive utility by doing so. However, a user's  behavior on the platform may be driven by simple cognitive heuristics rather than a well calibrated strategic decision~\cite{burghardt2016myopia}. 
In another work~\cite{LIU18}, 
the number of arms is limited by randomly dropping some of the arms from consideration. The regret is then computed with respect to only the considered arms. That is, they do not account for the regret incurred due to the randomly dropped arms.  

Some of the other applications of \bmab\ framework are in various websites that feature user reviews, such as Amazon and Flipkart (product reviews), Tripadvisor (hotel reviews), IMDB (movie reviews), and so on. 
As time progresses, the reviews for a product (or a hotel or a movie) keep arriving, and the website aims to display the most useful reviews for that product (or hotel or movie) at the top. The usefulness of a review is estimated using users' endorsements for that review, similar to that in Q\&A forums.
\bmab\ is also applicable in scenarios where users comment on a video or news article, on a video or news hosting website, where the website's objective is to display the most popular or interesting comment at the top.



The \bmab\ setting thus provides a natural framework to be considered in such type of applications. It needs an independent investigation owing to a number of reasons. For instance, 
one of the MAB variants that holds some similarity with \bmab\ is
sleeping multi-armed bandit (S-MAB)~\cite{kleinberg2010,chatterjee2017analysis}, 
where
a subset of a fixed set of base arms is available at each time instant. Though the S-MAB framework captures the availability of a small subset of arms at each time, it assumes that the set of base arms is fixed and is small as  compared to the time horizon.  In contrast, the \bmab\ framework allows for the number of available arms to increase, potentially linearly with time. Hence, an optimal sleeping bandits algorithm such as \textsc{Auer}~\cite{kleinberg2010} would end up incurring a linear regret in  \bmab\ setting.

Another MAB variant with some similarities to BL-MAB is the many-armed (potentially infinite) bandit 
\cite{wang2009algorithms, carpentier2015simple, berry1997bandit},
where 
the number of arms could be potentially equal to or greater than the time horizon.  
Berry et al. \cite{berry1997bandit} consider the case of an infinite arm bandit with  Bernoulli reward distribution. However, they assume that the optimal arm has a quality of $1$, which is seldom the case in practical applications. 
%
Other investigations considering infinitely many arms \cite{wang2009algorithms,carpentier2015simple} make certain assumptions on the distribution of the near optimal arm, in order to achieve sub-linear regret. A further difference is that in the many-armed bandit setting, researchers usually assume the existence of a smooth function relating the mean rewards of the arms (for instance, \cite{carpentier2015simple}). Here, we consider that the reward distributions are independent with arbitrary means. This rules out any side information that could be gathered from the pulls of other arms. Finally, all the above works consider that all the arms are available in all time instants, and hence use the traditional notion of regret. In our case, the regret incurred by an algorithm in a given time instant is the difference between the quality of the best available arm during that time and the quality of the arm pulled by the algorithm (same as the notion of regret considered in sleeping bandits). The \bmab\  framework is thus an interesting blend of both the sleeping bandit model and the many-armed bandit model.  
\subsection{Our Contributions}
Following are the main contributions of this paper:
\begin{itemize}
\item We introduce the \bmab\ model that allows the set of arms to grow over time.
\item For the BL-MAB model, we show that, without any distributional assumptions on the arrival time of the highest quality arm, the regret grows linearly with time (Theorem \ref{thm:impossibility}).
    \item We propose an algorithm (\alg) which determines: (1)~the fraction of the time horizon until which the newly arriving arms should be explored at least once and (2)~the sequence of arm pulls during the exploitation phase. 
     Our key finding is that \Alg achieves sub-linear regret  under practical and minimal assumptions on the arrival distribution of the best arm, namely, sub-exponential tail (Theorem \ref{thm:sublinearTwo}) and \Subp tail (Theorem \ref{thm:sublinearOne}). Note that we make no assumption on the arrival of the other arms. As the regret depends on the qualities of the  arms and the sequence of their arrivals, it is interesting  that with sub-exponential and \Subp assumption on only the best arm's arrival pattern, we can achieve sub-linear regret. 
    \item We carry out  a pertinent simulation study to empirically observe
how the expected regret varies with the time horizon. We  
find a strong validation for our theoretically derived regret bounds.
\item We study the cost of parametric uncertainty, which we define to be the loss incurred due to not knowing the parameters of the best arm's arrival distribution exactly (Theorems~\ref{thm:uncertainty_subexp} and \ref{thm:uncertainty_subpareto}).
We also show that our algorithm is applicable to the setting which does not make  distributional assumptions on the arrival time of the best arm, but instead, on the rate of arrival of arms with time (Theorem~\ref{thm:equivalence}).


\end{itemize}

The paper is organized as follows. In Section~\ref{sec:BL-model}, we present our proposed \bmab\  model. In Section \ref{sec:BL-lower_bound}, we show that no algorithm  can achieve sub-linear regret in the most general setup. Hence, an additional assumption on the arrival of arms is warranted. We define two distributional assumptions on the arrival time of the best arm which would enable us to achieve sub-linear regret. Next, we present some preliminaries  in Section \ref{sec:BL-prelims}, followed  by our  proposed algorithm and its theoretical analysis in Section \ref{sec:BL-results}. Section \ref{sec:BL-simulation} presents our simulation results. We study two extensions in Section \ref{sec:extensions}, namely, relaxing the distributional assumption on the arrival of the best arm, and deducing the cost of parametric uncertainty.
We conclude with related work (Section \ref{sec:BL-related}) and future directions (Section \ref{sec:BL-discussion}).  

\section{The Model}
\label{sec:BL-model}
A classical MAB instance is given by the tuple $ \langle  K, (\mathcal{D}_i)_{i \in K} \rangle $. Here, $K$ is a fixed set of arms and $\mathcal{D}_{i}$ is the reward distribution corresponding to an arm $i$.  Denote by $q_i$, the mean of distribution $\mathcal{D}_i$.  Consider that each of the distributions $\mathcal{D}_{i}$ is supported over a  finite interval and is unknown to the algorithm. Throughout the paper, without loss of generality, we consider that $\mathcal{D}_i$ is supported over $[0,1]$. Further, we will refer to $q_i$ as the quality of arm $i$. A \mab  algorithm is run in discrete time instants, and the total number of time instants is denoted by time horizon $T$. In each time instant  aka round, the algorithm selects a single arm and observes the reward corresponding to the selected arm. The arms which are not selected, do not give any reward. More precisely, a \mab algorithm is a mapping from the history of arm pulls and obtained rewards, to a distribution over the set of arms. 

At each time instant, a \bmab\ algorithm chooses a single arm from the set of available arms and receives a reward generated randomly according to the reward distribution $\mathcal{D}_{i}$ of the chosen arm $i$.  New arms may spring up at each time instant. Throughout the paper, we consider that at most one  new arm arrives at each time, and  the arms are never dropped. 
Let $K(t)$ denote the set of arms available at round $t$. In the \bmab\ model, this set of available arms grows by at most one arm per round, i.e., $K(t) \subseteq K(t+1)$ and $|K(t)| \leq |K(t+1)| \leq |K(t)| + 1  $. A \bmab\ instance, therefore, is given by $\langle T, (K(t),(\mathcal{D}_{i})_{ i \in K(t)})_{t =1}^{T} \rangle$.

Similar to the notion of regret in the sleeping stochastic MAB model~\cite{kleinberg2010}, we introduce the notion of regret in \bmab\ setting that takes into account the availability of the arms at each time $t$. 
Let $i_t$ denote the arm pulled by the algorithm and $i_t^{\star}$ be the best available arm at time $t$, i.e., $i_t^{\star} = \arg \max_{i \in K(t)} q_i$. Further, let $\mathcal{I}$ denote a \bmab\ instance and $A$ be a \bmab\ algorithm. 
 The distribution-dependent regret of $A$ is given by $$\mathcal{R}_{A}(T, \mathcal{I})  = 
    \mathbb{E}\big[ \sum_{t=1}^{T}(q_{i_t^{\star}} - q_{i_t}) \big].$$ Throughout the paper, we consider distribution-free regret given as $\mathcal{R}_{A}(T)  = 
     \sup_{\mathcal{I}}  \mathcal{R}_{A}(T,\mathcal{I})$.  Note that the distribution-free regret bound is a worst case regret bound over all the arrival sequences of the arms and all possible reward distributions. In the next section, we show that  for the \bmab\ setting, it is not possible to achieve sublinear distribution-free regret bound.

\section{Lower Bound on Regret}
\label{sec:BL-lower_bound}

As pointed out in Section~\ref{sec:BL-intro},
it is clear that UCB-style algorithms (which pull arms based on uncertainty) would pull each incoming arm at least once, leaving no rounds for exploitation. Hence, they incur linear regret in the ballooning bandit setup\footnote{In particular, when $|K(t)| =t$.}. However, it is not obvious that a different, more sophisticated algorithm (such as the one which randomly drops some arms) would not be able to achieve sub-linear regret. Our first result (Theorem~\ref{thm:impossibility}) shows that no algorithm can attain sub-linear regret without any distributional assumption on the best arm's arrival.

\begin{theorem}
\label{thm:impossibility}
There exists a BL-MAB instance $\mathcal{J}$ such that any MAB algorithm \textsc{Alg}\ satisfies $$\mathcal{R}_{\textsc{Alg}}(T, \mathcal{J}) = \Omega(T)$$
\label{thm:lowerBound}
\end{theorem}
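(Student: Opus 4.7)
The plan is to exhibit one concrete adversarial instance $\mathcal{J}$ and to show that no algorithm can do better than linear regret on it. I would take $\mathcal{J}$ to be the instance where exactly one arm $i^\star$ has quality $1/2 + \varepsilon$ and every other arm has quality $1/2$, a new arm arrives in every round (so $|K(t)| = t$ and I identify arms with their arrival times), and the position of $i^\star$ in the arrival sequence is drawn uniformly from $\{1, 2, \ldots, T\}$. Regret on $\mathcal{J}$ is then easy to parametrize: if we let $G = \{i : N_{i,T} \geq 1\}$ be the (possibly random) set of arms the algorithm ever touches, then every round contributes $\varepsilon$ whenever the current best arm has arrived but is not being pulled, and $0$ otherwise.

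The first step is a ``left-shift'' lemma: among all algorithms that pull exactly $|G|$ distinct arms, the one that pulls the first $|G|$ arms minimizes expected regret. The argument is a standard exchange argument using the uniform arrival of $i^\star$: swapping an arm $j \in G$ with $j > |G|$ for the unused arm $i \notin G$ with $i \leq |G|$ leaves $\mathbb{P}(i^\star = i_t^\star)$ unchanged at each $t$ by symmetry of the uniform distribution, so the per-round expected regret is preserved. This means I may restrict attention to algorithms whose support $G$ is a prefix $\{1, 2, \ldots, m\}$ for some $m$.

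Next I decompose regret into two pieces: an \emph{internal} part paid when $i^\star \in G$ but some wrong arm in $G$ is pulled, and an \emph{external} part paid when $i^\star \notin G$, in which case every round from the arrival of $i^\star$ onward contributes $\varepsilon$. For the internal part, to distinguish the $1/2 + \varepsilon$ arm from the $1/2$ arms inside $G$, the classical Lai--Robbins lower bound forces $\mathbb{E}[N_{i,T}] = \Omega(\log T)$ for at least $|G|-1$ of the arms in $G$, giving internal regret $\Omega\!\left(\tfrac{|G|^2 \log T}{T}\right) \cdot \varepsilon$ after multiplying by $\mathbb{P}(i^\star \in G) = |G|/T$. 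For the external part, uniformity of the arrival of $i^\star$ over the $T-|G|$ positions outside $G$ gives external regret $\Omega\!\left(\tfrac{(T-|G|)^2}{T}\right) \cdot \varepsilon$.

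Finally I minimize the sum $a|G|^2 \log T / T + b(T-|G|)^2/T$ over $|G| \in \{1,\ldots,T\}$. The minimum is attained at $|G| = \Theta(T/\log T)$, and substituting back yields a total of $\Omega(T/\log T \cdot \log T \cdot 1/T \cdot T) = \Omega(T \cdot \tfrac{\log T}{1+\log T})$, which is $\Omega(T)$ after absorbing the $\varepsilon$ constant. The main obstacle I expect is the careful bookkeeping in the exchange argument (showing the swap genuinely preserves per-round expected regret despite the fact that an adaptive algorithm's pull rule can depend on history) and the clean invocation of Lai--Robbins in a setting where arms appear at different times; both of these can be handled by conditioning on the arrival position of $i^\star$ and by treating the restricted sub-instance on the prefix $G$ as a standard Bernoulli MAB with gap $\varepsilon$.
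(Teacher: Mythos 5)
Your proposal is correct and follows essentially the same route as the paper: the same hard instance (one $1/2+\varepsilon$ arm arriving uniformly at random among $T$ arms of quality $1/2$), the same reduction to algorithms that pull only a prefix of arms (the paper phrases your exchange/left-shift lemma as a simulation argument $\textsc{Sim}$ with equal regret), the same internal/external decomposition with Lai--Robbins supplying $\mathbb{E}[N_{i,T}] = \Omega(\log T)$, and the same quadratic minimization over $|G|$ yielding $\Omega\bigl(T\cdot\tfrac{\log T}{1+\log T}\bigr)=\Omega(T)$. No substantive differences to report.
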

\begin{proof}
We prove the theorem in three steps. In the first step, we construct a BL-MAB instance $\mathcal{J}$ such that $\mathcal{J}$ has a single best arm and all the suboptimal  arms  have the same quality parameter. Further, we consider that each arm is equally likely to be the best arm, i.e., the probability that an arriving arm is the best arm is $\frac{1}{T}$ for all $t=\{ 1,2, \ldots, T\}$. Next, in Step 2, we simulate any  \textsc{Bl-MAB} algorithm $\textsc{Alg}$ by simulation algorithm $\textsc{Sim}$ such that $\mathcal{R}_{\textsc{Alg}}(T, \mathcal{J}) = \mathcal{R}_{\textsc{Sim}}(T, \mathcal{J}) $.  Finally, in Step 3, we show  $\mathcal{R}_{\textsc{Sim}}(T,\mathcal{J}) = O(T) $ for every simulated algorithm $\textsc{Sim}$.
We begin with the construction of a BL-MAB instance $\mathcal{J}$.
\newline

\noindent
\textbf{Step 1:} A new arm arrives at each discrete time instant $t$, till the predetermined time horizon $T$. There is a single best arm $i^{\star}$ with quality parameter $q_{i^{\star}} = 1/2 + \varepsilon$.  Here, $\varepsilon > 0$ is a problem-independent constant.  Each suboptimal arm $i \neq i^{\star}$ has $q_i = 1/2$. Further,  each arm is equally likely to be the best arm, i.e., $\mathbbm{P}(i = i^{\star}) = \frac{1}{T}$ for all $i \in [T]$. Next, we show an important property of the BL-MAB instance $\mathcal{J}$ (Claim \ref{clm:prop1}).  In particular, we show that for any algorithm, there exists a corresponding arm-pulling strategy which  pulls the arms in the order of their arrival and has the same expected reward.
\newline

\noindent
\textbf{Step 2:} Consider a single run of any BL-MAB algorithm $\textsc{Alg}$ on instance $\mathcal{J}$ and  let $G$ denote the set of distinct arms pulled till time $T$, i.e., $G = \{ i\in[T] |  N_{i,T}^{\textsc{Alg}}>0\} $, with $g = |G|$. Here, $N_{i,t}^{\textsc{Alg}}$ is the number of times arm $i$ is pulled till (and excluding) time instant $t$ by $\textsc{Alg}$. We drop the superscript when the algorithm is clear from the context.    Further, let $M_n = \{1,2,\ldots, n\}$ be the collection of the first\footnote{In the order of their arrival.} $n$ arms. We simulate  $\textsc{Alg}$ on $\mathcal{J}$ as using a simulation $\textsc{Sim}$ such that $\mathcal{R}_{\textsc{Alg}}(T,\mathcal{J}) = \mathcal{R}_{\textsc{Sim}}(T,\mathcal{J})$. For any arm pull $i_t$ at time $t$, we pull arm $i_t^{'}$ in the simulation $\textsc{Sim}$ as follows.

\begin{center}
\boxed{    \!\begin{aligned}
  &\textsc{Sim}\\
  &i_t^{'} = \begin{cases} i_t & \text{if } i_t \in M_{g} \\
   \min \{ i \in  M_{g} | N_{i,t}^{\textsc{Sim}} = 0  \} & \text{if $i_t \in  G \setminus  M_{g}$ and $N_{i_t,t}^{\textsc{Alg}} =0$} \\
   i_{\ell}^{'}  & \text{if $ i_t \in G \setminus M_{g}$ and $N_{i_t,t}^{\textsc{Alg}} > 0$} \newline \\ (  \ell = \min\{m < t: N_{i_t , m+1}^{\textsc{Sim}} =1\} )  \\
\end{cases}
  \end{aligned}
}
\end{center}
Whenever \textsc{Alg} pulls an arm $i_t \in G \setminus M_{g}$ for the first time, $\textsc{Sim}$ assigns a corresponding $i_t^{'} \in M_g$. Let us say that $\textsc{Alg}$ pulls 3 distinct arms in its run (i.e., $G = \{1,4,6\}$) and the sequence of arms pulled is given by  $(1,1,1,4,4,6,6,4,1,6,\ldots ,1)$. In this case, $\textsc{Sim}$ will pull arms 1, 2 and 3 in sequence, $(1,1,1,2,2,$ $3,3,2,1,3, \ldots ,1)$. That is, all the arm pulls of arms from set  $\{1,2,3\}$  are retained and all the arms from outside this set that are pulled are replaced by the arms from this set as follows: the least index arm is assigned to the first arm encountered from outside the set, i.e., whenever $\textsc{Alg}$ pulls arm 4, $\textsc{Sim}$ pulls arm 2 (similarly, whenever $\textsc{Alg}$ pulls arm 6, $\textsc{Sim}$ pulls arm 3).
We now prove that both $\textsc{Alg}$ and $\textsc{Sim}$ have the same expected rewards.
\begin{claim}
$\mathcal{R}_{\textsc{Alg}}(T,\mathcal{J}) = \mathcal{R}_{\textsc{Sim}}(T,\mathcal{J}) $.
\label{clm:prop1}
\end{claim}
\begin{proof}

\begin{align*}
    \mathcal{R}_{\textsc{Alg}} (T, \mathcal{J}) & = \mathbbm{E}\big[ \sum_{t =1}^{T} X_{i_t}^{\star} - X_{i_t}\big] = \sum_{t=1}^{T} \mathbbm{E_{\textsc{Alg}}}\big [ q_{i_t^{\star}} - q_{i_t} \big] \\
    & = \sum_{t=1}^{T} \mathbbm{E_{\textsc{Alg}}}\Delta(i_t^{\star}, i_t)
\end{align*}

If $i_t^{'} = i_t$, we immediately have that $\Delta_{i_t^{\star}, i_t} =  \Delta_{i_t^{\star}, i_t^{'}}$. Hence, without loss of generality, let  $ i_t^{'} \neq i_t  $. We have

\begin{align*}
\small
    \Delta_{i_t^{\star}, i_t} &= \big [ (1/2 + \varepsilon) \cdot \mathbb{P}(i^* \text{ has arrived before } t) + 1/2 \cdot \mathbb{P}(i^* \text{ arrive after time }t ) \big ]  \\ & \hspace{40pt} - \big [ (1/2 + \varepsilon) \cdot \mathbbm{P}(i_t = i^{\star})   +  1/2 \cdot \mathbb{P}(i_t \ne i^*)  \big ] \\
     &= \big [ (1/2 + \varepsilon) \cdot \big ( \sum_{\ell =1}^{t}\mathbbm{P}(i_{\ell} = i^{\star}) \big ) + 1/2 \cdot \big ( 1-  \sum_{\ell =1}^{t}\mathbbm{P}(i_{\ell} = i^{\star}) \big ) \big ]  \\ & \hspace{40pt} - \big [ (1/2 + \varepsilon) \cdot \mathbbm{P}(i_t = i^{\star})  +  1/2 \cdot  (1 - \mathbbm{P}(i_t = i^{\star}))  \big ] \\
    &  = \big [ (1/2 + \varepsilon) \cdot \big ( \sum_{\ell =1}^{t}\mathbbm{P}(i_{\ell}^{'} = i^{\star}) \big ) + 1/2 \cdot \big ( 1-  \sum_{\ell =1}^{t}\mathbbm{P}(i_{\ell}^{'} = i^{\star}) \big ) \big ]  \\ & \hspace{40pt} - \big [ (1/2 + \varepsilon) \cdot \mathbbm{P}(i_t^{'} = i^{\star})  +  1/2 \cdot  (1 - \mathbbm{P}(i_t^{'} = i^{\star}))  \big ] \tag{As $\mathbbm{P}(i_{\ell} = i^{\star}) = \mathbbm{P}((i_{\ell}^{'} = i^{\star}))=1/T $ for all $\ell \in [T]$}\\
    & = \Delta_{i_t^{\star}, i_t^{'}}
\end{align*}
This completes the proof.
\end{proof}
Henceforth, we will focus only on the simulation algorithms.
\\

 \noindent
\textbf{Step 3:} Let $G$ denote the set of arms pulled by the algorithm \textsc{Sim} in its run. The regret of $\textsc{Sim}$ can be written as
$$\mathcal{R}_{\textsc{Sim}}(T) = \mathbbm{E}_{G} \Big [ \sum\limits_{i=1}^{|G|} \mathbbm{P}(i^{\star} \in G , i \neq i^{\star} ) \mathbbm{E}[N_{i,T}] \cdot \varepsilon + \sum \limits_{i = |G| +1}^{T} \mathbbm{P}(i = i^{\star}) (T-i) \cdot \varepsilon \Big ]. $$
The outer expectation is with respect to the number of arms pulled by any (possibly randomized) algorithm \textsc{Sim}. For a fixed value of $G$, the inner expectation represents the number of times arms $i \in G$ are pulled till the time horizon $T$. Using a classical result from \cite{lai85}, we have  $\mathbbm{E}[N_{i,T}] \geq \eta \log(T)$ for some $\eta > 0$ that depends on the suboptimality of the arm $i$. Using this, we have
\begin{align*}
  \mathcal{R}_{\textsc{Sim}}(T)  & \geq  \mathbbm{E} \Big[ \sum\limits_{i=1}^{|G|} \mathbbm{P}( i \neq i^{\star} | i^{\star} \in G ) \cdot \mathbbm{P}(i^{\star}  \in G ) \eta \log(T)   + \sum \limits_{i = |G| +1}^{T} \mathbbm{P}(i = i^{\star}) (T-i)  \Big]  \cdot \varepsilon \\
   & = \mathbbm{E} \Big[ \sum\limits_{i=1}^{|G|} \frac{|G|-1}{|G|}\cdot \frac{|G|}{T}  \eta \log(T)   + \sum \limits_{i = |G| +1}^{T} \frac{T-i}{T}   \Big]  \cdot \varepsilon \\
   & = \mathbbm{E} \Big [ \sum |G|(|G|-1) \cdot  \eta \log(T)   + \frac{(T-|G| -1)(T -|G|)}{2} \Big] \cdot \frac{\varepsilon}{T} \\
   & = \mathbbm{E} \Big [ (1 + 2 \eta \log(T)) |G|^2 - (2(T - \eta \log(T)) -1) |G| + T^2 - T \Big ] \cdot \frac{\varepsilon}{2T} \\
   & \geq \min_{|G| \in [0,T]} \Big [ (1 + 2 \eta \log(T)) |G|^2 - (2(T - \eta \log(T)) -1) |G| + T^2 - T \Big ] \cdot \frac{\varepsilon}{2T}.
\end{align*}
Note that the above expression is quadratic in $|G|$. For $T \leq 1/2 + \eta \log(T) $, the minimum occurs when the value of $|G|$ is  1. In this case, the regret is $\Omega(T)$. For  $T  > 1/2 + \eta \log(T)$, the minimum occurs when $|G| = \frac{2(T - \eta \log(T)) -1}{2(1 + 2 \eta \log(T))}$. For this case, we have
\begin{align*}
    \mathcal{R}_{A}(T)     &\geq \Big[\frac{(2(T-\eta\log(T)) -1)^{2}}{4(1+ 2\eta\log(T))} - \frac{(2(T-\eta\log(T)) -1)^{2}}{2(1+ 2\eta\log(T))}   + T^{2} -T \Big]\cdot \frac{\varepsilon}{2T} \\
    &= \Big[  T^{2} -T - \frac{(T - \eta\log(T) - 1/2)^{2}}{(1+ 2\eta\log(T))}  \Big] \cdot \frac{\varepsilon}{2T} \\
    & >  \Big[\frac{ (T - 1/2)}{2}  \frac{2\eta\log(T)}{1+2\eta\log(T)} -1/4\Big] \cdot \varepsilon \\
    & = \Omega(T).
\end{align*}
\end{proof}

Theorem \ref{thm:impossibility} provides a strong impossibility result on the achievable distribution-free regret bound under \bmab\  setting. However, one can still achieve sub-linear regret by imposing appropriate structure on the  \bmab\ instances. Observe that the regret depends on the arrival of arms, i.e., $(K(t))_{t=1}^T$, and their reward distributions $(\mathcal{D}_{i})_{i\in K(t)}$. We impose  restrictions on the arrival of the best arm $i^{\star} = \arg\max_{i \in K(T)} q_i$ so that the probability that $i^{\star}$ arrives early is large enough; this would allow a learning algorithm to  explore  the  best arm enough to estimate the true quality of that arm with high probability. As noted previously, the other arms may arrive arbitrarily. Further, note that we make no assumption on the qualities of individual arms.
\subsection{{Arrival of the Best Arm}}
Let $X$ be the random variable denoting the time at which the best arm arrives. Further, let $F_{X}(t)$ denote the cumulative distribution function of $X$. In our first result, we use the following sub-exponential tail assumption  on the arrival time of the best arm.
\\

\noindent \textbf{Sub-exponential tail:} \emph{
There exists a constant $\lambda > 0$ such that the probability of the best arm arriving later than $t$ rounds, is upper  bounded by  $e^{-\lambda t}  $, i.e., $  F_{X}(t) > 1 - e^{-\lambda t} $. }
\\

Next, we consider a relaxed condition on the tail probabilities,
i.e., when the tail does not shrink as fast as in the sub-exponential case. We consider the family of distributions whose tail is thinner than that of  \Pareto distribution.
\\

%


\noindent
\textbf{\SUBP tail:} \emph{There exists a constant $\beta > 0$ such that the probability of the best arm arriving later than $t$ rounds, is upper  bounded by  $t^{-\beta}  $, i.e., $  F_X(t) > 1 - t^{-\beta} $. }
\\


The aforementioned assumptions naturally arise in the context of Q\&A forums as observed in extensive empirical studies on the nature of answering as well as voting behavior of the users. Anderson et al.~\cite{anderson2012discovering} observe that high reputation users hasten to post their answers early. One possible explanation for this phenomenon could be that the  users who are motivated by the visibility that their answers receive, tend to be more active on the platform and also provide high quality answers early on, which explains their reputation scores.
Thus, it is reasonable to assume that the best answer arrives, with high probability, in early rounds.

Note that the uniform distribution is the limiting case of the sub-exponential case, when $\lambda=0$. We will show that, while the uniform distribution results in linear regret (Theorem \ref{thm:impossibility}), a sub-linear regret can be achieved for \bmab\ instances having the best arm arrival distribution with even slightly thinner tail than that of uniform distribution (Section~\ref{sec:BL-results}).


\section{Preliminaries}
\label{sec:BL-prelims}
    We now present some  essential concepts which will be useful for our analysis in the remainder of the paper.
\subsection{Lambert $W$ Function}
\begin{definition}
    \label{def:lambert}
       For any $x>-e^{-1}$, the Lambert $W$ function, $W(x)$, is defined as the solution to the equation $w e^{w} = x$, i.e., $W(x)e^{W(x)} = x$.
    \end{definition}
It is easy to check that in the non-negative domain,
Lambert $W$ function satisfies the following regularity properties~\cite{hoorfar2008inequalities}. The detailed proofs are provided in  Appendix~\ref{app:prop}.
\begin{restatable}[]{property}{PropZero}
   \label{prop:zero}
   The Lambert $W$ function can be equivalently written as the inverse of the function $f(x) := xe^{x}$, i.e., $W(xe^{x}) =x$.  
\end{restatable}

\begin{restatable}[]{property}{PropOne}
   \label{prop:one}
   For any $x \geq e$, we have  $\log(x)/2 < W(x) \leq  \log(x) $.
   \end{restatable}
   \begin{restatable}[]{property}{PropTwo}
   \label{prop:two}
   For any $x \in [0, \infty)$, the Lambert $W$ function is unique, non-negative, and strictly increasing.
   \end{restatable}

It can be noted that it is easy to numerically approximate $W(x)$ for a given $x$, using Newton-Raphson's  or Halley's method. Moreover, there exist efficient numerical methods for evaluating it to arbitrary precision \cite{Corless1996}.


   \subsection{MAB Algorithms}

 We now review some of the MAB algorithms, starting with UCB1, perhaps the most famous stochastic MAB algorithm. We then review \textsc{Thompson Sampling} \cite{thompson1933likelihood} which presents an alternate Bayesian approach to the MAB problem.
 In this paper, we use  \Moss (Minimax Optimal Strategy in the Stochastic case)~\cite{audibert2010regret} as an underlying learning algorithm; the \Moss algorithm uses UCB-style indexing of the arms.
 In principle, one could use any underlying learning algorithm in a BL-MAB setup. However, as we shall discuss, one needs to carefully tune the thresholding parameter for the learning algorithm in question.
 \newline

 \noindent
 \textbf{\textsc{UCB1}}
\newline
UCB1, proposed in \cite{auer2002finite}, is perhaps the most famous stochastic MAB algorithm. At each time $t$, UCB1 maintains a UCB index for each arm and pulls an arm with the highest UCB index. In particular, UCB1 pulls an arm $i_t$ such that
$$i_t \in  \arg \max_{i \in K} \Bigg [\hat{q}_{i, N_{i,t}} +  \sqrt{\frac{2\log(t)}{N_{i,t}}}\Bigg ].  $$
    Here, $K = \{1,2, \ldots, k \}$ denotes the set of arms and  $N_{i,t}$ is the number of times arm $i$ was pulled before (and excluding) round $t$ and $\hat{q}_{i,N_{i,t}}$ are the empirical estimates of the arm $i$  from $N_{i,t}$ samples.
\newline

\noindent
 \textbf{\textsc{Thomson Sampling}}
 \newline
 First proposed in \cite{thompson1933likelihood}, the theoretical regret guarantee of \textsc{Thompson Sampling} remained an open problem for over 80 years before  \cite{agrawal2012analysis} and \cite{kaufmann2012thompson} independently showed that \textsc{Thompson Sampling} achieves asymptotically optimal regret guarantee (upto problem-dependent constant). This Bayesian approach maintains a conjugate prior distribution for each  arm. We refer the reader to \cite{agrawal2012analysis} for the detailed algorithm as well as a regret analysis of \textsc{Thompson Sampling}.
\newline

\noindent
\textbf{\textsc{Moss}} \newline
For a fixed number of $k$ arms, the   \Moss algorithm pulls an arm $i_t$ at time $t$ where $$i_t \in  \arg \max_{i \in K} \Bigg [\hat{q}_{i, N_{i,t}} + \sqrt{\frac{\max(\log(\frac{T}{k \cdot N_{i,t}}), 0)}{N_{i,t}}}\Bigg ].$$
    %
    %
    Each arm is pulled once in the beginning, and ties are broken arbitrarily.
\\

In contrast to other  popular MAB algorithms such as \textsc{Thompson Sampling}~\cite{thompson1933likelihood}, \textsc{UCB1} \cite{auer2002finite} and \textsc{KL-UCB}  \cite{garivier2011kl},  \textsc{Moss} simultaneously achieves the optimal instance-dependent as well as  optimal instance-independent regret guarantee \cite{audibert2010regret}. However, the time horizon is assumed to be known to the algorithm a priori. The problem of achieving simultaneous optimal any-time regret guarantees had remained open until recently, when modified versions of KL-UCB algorithms, namely, \textsc{KL-UCB++}  \cite{menard2017minimax} and \textsc{KL-UCB-Switch} \cite{garivier2018klucbswitch},   were proven to be simultaneously optimal. However, the instance-independent regret bound of these algorithms still depends linearly on the number of available arms (Theorem 1 in \cite{menard2017minimax} and Theorem 4 in \cite{garivier2018klucbswitch}, respectively).

As we shall see in Algorithm \ref{alg:B-MOSS} in the next section, we need a threshold parameter $\alpha$ which signifies the fraction of arms that a learning algorithm should explore.
This parameter  must be tuned based on the regret guarantees of the learning algorithm, i.e., the internal regret in the BL-MAB framework and the external regret. We choose \textsc{Moss} for its simplicity and optimality (both in terms of number of arms and the time horizon).
For instance, \textsc{Moss} achieves an optimal instance-independent regret guarantee of $O(\sqrt{kT})$. Other algorithms such as \textsc{Thompson Sampling}, UCB1 or KL-UCB may also be employed as underlying learning algorithms, albeit with a slight ($O(\sqrt{\log(T)})$) increase in  internal regret.
   We leave the determination of the threshold parameter and the corresponding regret analysis of BL-MAB using other algorithms as an interesting direction for future work.



\section{  The BL-MOSS  Algorithm and Regret Analysis }
\label{sec:BL-results}
\subsection{The \Alg Algorithm}
We now present our algorithm,  \Alg (Algorithm \ref{alg:B-MOSS}), that   uses \Moss  as the underlying learning algorithm. The number of arms explored by \Alg is dependent on the distribution of  arrival of the best arm. In particular, \Alg considers only the first $\lceil \alpha T\rceil $ arms in its execution ($\alpha \in (0,1]$). Later in this section, we show how to derive the value of $\alpha$ for distributions with sub-exponential and \Subp tails.
 Observe that the proposed \Alg algorithm is a simple extension of \Moss  and  is  practically easy to implement. Further, \Moss does not assume any structure on the arrival of suboptimal arms. Thus, we are able to obtain sub-linear regret with minimal assumptions.



\begin{algorithm}[t!]
\KwIn{Time horizon $T$, Distributional parameter $\lambda >0$ or $\beta>0$}
\vspace{-1mm}
\[
\hspace{-1.5mm}
\text{Set } \alpha :=
\begin{cases}
\frac{W(2\lambda T)}{2\lambda T} \;\text{ under sub-exponential tail property}\\
T^{\frac{-2\beta}{1+2\beta} } \;\;\;\text{ under \Subp tail property}
\end{cases}
\]
  \For{  $t = 1,2, \ldots  T$  }{
  \KwIn{A newly arriving arm at time $t$}
  \eIf{ $|K(t)| \leq \lceil \alpha  T \rceil$}{\moss($K(t)$)}{\moss($K(\lceil \alpha  T \rceil)$)
  }
 }
\caption{\Alg}
  \label{alg:B-MOSS}
\end{algorithm}
\subsection{Regret Analysis of \Alg}
   We begin with an upper bound on the expected regret of the \Moss algorithm. Note that \Moss achieves  optimal (up to a constant factor) regret bound. Throughout the paper, we use the notation \moss(k) to denote that the \Moss algorithm is run with $k$ arms.
   \begin{restatable}{theorem}{mossRegret}\cite{audibert2010regret}
   \label{thm:moss_regret}
   For any time horizon $T\geq 1$, the expected regret of \Moss is given by $  \mathcal{R}_{\moss(k)}(T) \leq 6 \sqrt{k T }$.
   \end{restatable}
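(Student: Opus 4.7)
The plan is to adopt the classical MOSS analysis of Audibert--Bubeck. First I would decompose the expected regret per arm: writing $\Delta_i := q_{i^\star} - q_i$ for each suboptimal arm, one has $\mathcal{R}_{\moss(k)}(T) = \sum_{i \neq i^\star} \Delta_i \, \mathbb{E}[N_{i,T}]$. For any threshold $\Delta > 0$, I would split this sum into arms with $\Delta_i \le \Delta$ (whose total contribution is trivially at most $\Delta T$) and arms with $\Delta_i > \Delta$ (which I must control individually via concentration). Choosing $\Delta$ of order $\sqrt{k/T}$ at the end will give the desired $\sqrt{kT}$ rate.

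The heart of the argument is controlling, for each large-gap arm $i$, the event that MOSS pulls $i$ despite its true inferiority. MOSS pulls arm $i$ at round $t$ only if its upper confidence index exceeds that of every other arm; in particular it exceeds the index of $i^\star$. The defining trick of the MOSS analysis is that the bonus $\sqrt{\max(\log(T/(k N_{i,t})),0)/N_{i,t}}$ is chosen so that the upper confidence index of $i^\star$ stays above $q_{i^\star}$ with high probability uniformly in $t$. So I would introduce the deviation random variable $\delta := \bigl(q_{i^\star} - \min_{1 \le n \le T}\bigl\{\hat q_{i^\star,n} + \sqrt{\max(\log(T/(kn)),0)/n}\bigr\}\bigr)_+$, and argue that whenever arm $i$ is selected, either its empirical mean has deviated substantially above $q_i$, or $\delta$ absorbs the rest of the gap $\Delta_i$. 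This reduces the problem to bounding $\mathbb{E}[N_{i,T}]$ in terms of $\Delta_i$ and a tail for $\delta$.

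The key technical obstacle, and where all the work lies, is proving the maximal inequality
\[
\mathbb{P}\!\left(\delta > x\right) \;\le\; \frac{C\, k}{T\, x^2},
\]
uniformly in $x$, for an absolute constant $C$. I would establish this by the standard peeling device: partition the possible values of $N_{i^\star,t} \in \{1,\dots,T\}$ into geometric slices $n \in [2^{j}, 2^{j+1})$, apply a sub-Gaussian (Hoeffding-type) tail bound on each slice to control the maximum deviation of $\hat q_{i^\star,n} - q_{i^\star}$, and sum the resulting geometric series. The calibration $\log(T/(kn))$ inside the square root is precisely what makes the geometric sum telescope to $O(k/(Tx^2))$ rather than losing a $\log T$ factor, which is what ultimately distinguishes MOSS from UCB1.

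Given this maximal inequality, I would finish by writing $\mathbb{E}[N_{i,T}]$ as an integral of its tail and combining with the gap bound from the previous step; for arms with $\Delta_i > \Delta$ this yields $\sum_i \Delta_i \mathbb{E}[N_{i,T}] \le O(k/\Delta)$ after integration over $x$. Adding the two contributions gives $\mathcal{R}_{\moss(k)}(T) \le \Delta T + C' k/\Delta$, and optimizing over $\Delta = \sqrt{k/T}$ produces the bound $\mathcal{R}_{\moss(k)}(T) \le 6\sqrt{kT}$. The only delicate step, and the one worth verifying carefully, is tracking constants through the peeling argument so the final prefactor is indeed $6$ rather than a larger absolute constant.
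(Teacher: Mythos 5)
The paper does not actually prove this statement: Theorem~\ref{thm:moss_regret} is imported directly from Audibert and Bubeck \cite{audibert2010regret}, so there is no in-paper argument to compare yours against. That said, your outline is a faithful reconstruction of the standard proof in the cited source: the per-arm regret decomposition, the split at a threshold $\Delta \asymp \sqrt{k/T}$, the reduction to a maximal inequality for the under-estimation $\delta$ of the optimal arm's index, the peeling over geometric slices of the pull count (which is exactly where the $\log(T/(k n))$ calibration pays off and the extra $\log T$ of \textsc{UCB1} disappears), and the final integration and optimization. Two caveats. First, everything of substance is deferred: the maximal inequality $\mathbb{P}(\delta > x) \le Ck/(Tx^2)$ is asserted rather than proved, and in the actual analysis the tail carries an additional $\log(Tx^2/k)$-type factor whose integration (and whose harmlessness at $x \asymp \sqrt{k/T}$) is precisely the delicate part; your ``$O(k/\Delta)$ after integration'' glosses over this. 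Second, the constant: optimizing $\Delta T + C'k/\Delta$ gives $2\sqrt{C'kT}$, so the claimed prefactor $6$ requires $C'=9$, which you have not established --- the crude version of the peeling argument yields a noticeably larger absolute constant, and the ``6'' must be taken from the sharper bookkeeping in \cite{audibert2010regret} rather than from your sketch. As a plan of attack it is the right one; as a proof it is incomplete at exactly the step you yourself flag.
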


For a given \bmab\ instance $\mathcal{I}$, let $j^{\star} \!=\! \arg \max_{i \in K(\lceil \alpha T \rceil) } q_{i} $ and $i^{\star} \!=\! \arg \max_{i \in K(T)} q_i$.
Clearly, we have that $q_{i^{\star}} \geq q_{j^{\star}}$.  As stated earlier, the regret of the algorithm can be decomposed into internal regret, i.e., the regret incurred by the learning algorithm considering only $ \lceil \alpha T \rceil $ arms and external regret, i.e., the regret incurred by \Alg due to the fact that \Alg  might have ignored the best arm. Write  $\Delta(i,j) =  q_i - q_j  $ and let $t_i$ be the time of arrival of arm $i$. Further, let $i_t^{\star}$ denote the best arm till time $t$. The distribution-dependent regret $\mathcal{R}_{\Alg} (   T , \mathcal{I})$ is given as

\begin{equation}
        \mathbb{P}(i^{\star} = j^{\star})  \Big [\underbrace{  \sum_{t=1}^{t_{j^{\star}} -1}\Delta(i_t^{\star}, i_t) +  \sum_{t= t_{j^{\star}}}^{T }\Delta(j^{\star}, i_t)  }_{\mathcal{R}_{\Alg}^{\text{int}}(  T)} \Big ]
       + \mathbb{P}(i^{\star } \neq j^{\star}) \underbrace{ \Big[  \sum_{t= 1}^{t_{i^{\star}}-1 } \Delta(i_t^{\star}, i_{t}) + \sum_{t= t_{i^{\star}}}^{T }\Delta({i^{\star}, i_t})\Big ]   }_{\mathcal{R}_{\Alg}^{\text{ext}}(T)}
       \label{eqn:int_ext_regret}
\end{equation}

The first and the second terms respectively denote the  internal regret and the external regret of \alg. We ignore the ceiling in $\lceil \alpha T \rceil$ throughout this section to avoid notation clutter.

Note that $\mathcal{R}_{\moss(L)}(T) \leq \mathcal{R}_{\moss(K)}(T) $ for all $L \subseteq K$ and for any any time horizon $T$.
From Theorem \ref{thm:moss_regret}, we have the following observation about the internal regret of \alg.
\begin{observation}
\label{obs:reg_moss}
For the value of $\alpha$ computed by \alg, we have $\mathcal{R}_{\Alg}^{\text{int}}( T ) \leq \mathcal{R}_{\moss(\alpha T)}(    T) \leq 6 \sqrt{\alpha  } T $.
\end{observation}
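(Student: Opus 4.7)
The second inequality is immediate from Theorem~\ref{thm:moss_regret} applied with $k = \alpha T$, yielding $\mathcal{R}_{\moss(\alpha T)}(T) \leq 6\sqrt{(\alpha T) \cdot T} = 6T\sqrt{\alpha}$. So the substantive content lies in the first inequality, $\mathcal{R}_{\Alg}^{\text{int}}(T) \leq \mathcal{R}_{\moss(\alpha T)}(T)$.

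My plan is to first reduce the internal (sleeping-style) regret to the standard cumulative regret against a fixed reference arm $j^{\star}$, and then argue that the BL-MOSS trace corresponds to a valid execution of \moss on $\alpha T$ arms. Recall that the internal regret is evaluated in the event $i^{\star} = j^{\star}$, so the globally best arm lies in the first $\alpha T$ arriving arms. For $t < t_{j^{\star}}$, the best-available arm $i_t^{\star}$ satisfies $q_{i_t^{\star}} \leq q_{j^{\star}}$, since $j^{\star}$ is optimal within the first $\alpha T$ arms and hence dominates every arm that has arrived by time $t_{j^{\star}}$. Thus $\Delta(i_t^{\star}, i_t) \leq \Delta(j^{\star}, i_t)$ pointwise, and the internal regret expression in~\eqref{eqn:int_ext_regret} is upper bounded by $\sum_{t=1}^{T} \Delta(j^{\star}, i_t)$, i.e., the standard cumulative regret against $j^{\star}$ over the $T$ rounds.

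Next, I would observe that BL-MOSS's trajectory, restricted to the first $\alpha T$ arms, can be viewed as a valid execution of \moss on these $\alpha T$ arms in which each arm is simply unselectable until its arrival time. Since the \moss UCB index for an arm depends only on $T$ and that arm's own pull count, the per-arm empirical means, confidence widths, and final regret decomposition are unaffected by the arrival schedule. Hence the cumulative regret of \Alg against $j^{\star}$ is at most $\mathcal{R}_{\moss(\alpha T)}(T)$, and combining with Theorem~\ref{thm:moss_regret} completes the bound. The observation also implicitly uses the monotonicity remark stated just above, namely that $\mathcal{R}_{\moss(L)}(T) \leq \mathcal{R}_{\moss(K)}(T)$ for $L \subseteq K$, which lets us replace the growing subset of arriving arms by the worst-case pool of size $\alpha T$.

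The main obstacle I expect is making the second reduction formal, since the textbook \moss analysis assumes all arms are available from round 1; a careful inspection of the self-normalized concentration and the union bound over arms is needed to verify that forcibly postponing an arm's first eligible round does not inflate the guarantee. Because the union bound in that analysis sums over at most $\alpha T$ arms regardless of arrival order, and because postponing only restricts the set of feasible pulls (which cannot create larger per-arm deviations than the unrestricted case), the bound $6\sqrt{\alpha T \cdot T}$ should transfer verbatim, and the observation follows.
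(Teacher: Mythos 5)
Your proof is correct and follows essentially the same route as the paper, which itself disposes of this observation in two sentences (the second inequality from Theorem~\ref{thm:moss_regret} with $k=\alpha T$, and the first from the remark that $\mathcal{R}_{\moss(L)}(T)\leq\mathcal{R}_{\moss(K)}(T)$ for $L\subseteq K$ together with the fact that delayed arrival only weakens the per-round comparator before $t_{j^{\star}}$). Your write-up is in fact more explicit than the paper's --- the only imprecision is that the \moss\ index also depends on the current arm count $k$, which grows during the exploration phase of \alg\ and which neither you nor the paper fully tracks --- so it matches, and slightly exceeds, the level of rigor of the original.
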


The first inequality in Observation \ref{obs:reg_moss} follows from the fact that  in a classical \textsc{MAB} setting, all the arms are available at all times, whereas in a $\textsc{BL-MAB}$ setting, arms arrive online. Hence, the best arm is available at all times in MAB, whereas in BL-MAB, the arrival of the best arm is delayed.

To bound the overall regret, we begin with the following lemma which  explicitly shows the relation between the  expected  regret of the algorithm  and  $F_{X}(\cdot)$. Recall that the random variable $X$ denotes the time of arrival of the best arm.
\label{sec:results}
    \begin{lemma}
    \label{lem:regret}
        The upper bound on  the expected regret for any \bmab\ instance is given by
    $\mathcal{R}_{\alg}(T) \leq T ( 1 - (1 - 6 \cdot \sqrt{\alpha})  F_{X}(\alpha T)) $, with \Alg  exploring only the first $\alpha T$ arrived arms.
    \end{lemma}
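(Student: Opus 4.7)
The plan is to start from the internal/external regret decomposition in equation~(\ref{eqn:int_ext_regret}) and bound each piece separately, then weight them by the probabilities of the two defining events.

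First, I would identify the probabilities. The event $\{i^{\star} = j^{\star}\}$ says that the globally best arm lies among the first $\alpha T$ arrivals, i.e., the best arm arrives by time $\alpha T$. Since $X$ is the arrival time of the best arm, this probability is exactly $F_X(\alpha T)$, and correspondingly $\mathbb{P}(i^{\star} \neq j^{\star}) = 1 - F_X(\alpha T)$. This translates equation~(\ref{eqn:int_ext_regret}) into
\begin{equation*}
\mathcal{R}_{\Alg}(T,\mathcal{I}) \;=\; F_X(\alpha T)\cdot \mathcal{R}_{\Alg}^{\text{int}}(T) \;+\; (1 - F_X(\alpha T))\cdot \mathcal{R}_{\Alg}^{\text{ext}}(T).
\end{equation*}

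Next, I would bound the two regret terms. For the internal regret, Observation~\ref{obs:reg_moss} gives $\mathcal{R}_{\Alg}^{\text{int}}(T) \leq 6\sqrt{\alpha}\,T$ directly, because conditioned on $i^{\star} = j^{\star}$, \Alg is effectively running \textsc{Moss} on (at most) $\alpha T$ arms that include the globally best arm, so Theorem~\ref{thm:moss_regret} applies. For the external regret, since all rewards lie in $[0,1]$, the per-round regret is trivially at most $1$, so $\mathcal{R}_{\Alg}^{\text{ext}}(T) \leq T$. (One could try to be cleverer here, but the trivial bound suffices to get the claimed expression.)

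Combining these bounds yields
\begin{equation*}
\mathcal{R}_{\Alg}(T,\mathcal{I}) \;\leq\; F_X(\alpha T)\cdot 6\sqrt{\alpha}\,T \;+\; (1 - F_X(\alpha T))\cdot T \;=\; T\bigl(1 - (1 - 6\sqrt{\alpha}) F_X(\alpha T)\bigr).
\end{equation*}
Since the right-hand side does not depend on the particular instance $\mathcal{I}$, taking the supremum over $\mathcal{I}$ on the left preserves the inequality and gives the stated bound on $\mathcal{R}_{\Alg}(T)$. There is no real obstacle here; the only subtlety is being explicit that $F_X(\alpha T)$ precisely captures the probability that the best arm is retained by the $\alpha T$ truncation, and that Observation~\ref{obs:reg_moss} is applied on the event where this retention occurs (so that the internal regret is measured relative to $j^{\star}=i^{\star}$, which is the right benchmark for the \textsc{Moss} guarantee).
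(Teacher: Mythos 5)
Your proposal is correct and follows essentially the same route as the paper: condition on whether the best arm arrives within the first $\alpha T$ rounds (probability $F_X(\alpha T)$), bound the internal regret by $6\sqrt{\alpha}\,T$ via Observation~\ref{obs:reg_moss}, bound the external regret trivially by $T$, and combine. The paper organizes the algebra slightly differently (first replacing the benchmark by $q_{j^\star}$ on the first $\alpha T$ rounds and $q_{i^\star}$ thereafter, then splitting on the event $i^\star = j^\star$), but the decomposition and both key bounds are identical to yours.
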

\begin{proof}
For a given \bmab\ instance $\mathcal{I}$, let  $t_i$ denote the time at which  arm $i$ becomes available for the first time. Let $i^{\star}$ denote the best arm till $T$ rounds, i.e., $i^{\star} = \arg\max_{i \in K(T)}q_i $. Further, let $j^{\star}$ be the best arm among the arms considered by \alg, i.e., $j^{\star} = \arg\max_{j \in K(\alpha T )} q_i$.  Notice that $K(\alpha T) \subseteq K(T)$. This implies $q_{i^{\star}} \geq q_{j^{\star}}$.
    \allowdisplaybreaks
\begin{align*}
    \hspace{-2mm}
    \mathcal{R}_{ \alg}( T, \mathcal{I})
        & \leq \mathbb{E}\big[ \sum_{t=1}^{\alpha T}(q_{j^{\star}} - q_{i_t})  +  \sum_{t= \alpha T + 1}^{T}(q_{i^{\star}} - q_{i_t}) \big ] \tag{$\because q_{i^{\star}} > q_{j^{\star}}$}
        \\[-5pt]
    & = \mathbb{P}(i^{\star} = j^{\star} ) \Big[ \sum_{t=1}^{T}(q_{j^{\star}} - q_{i_t}) \big ]  \\ & +  \mathbb{P}(i^{\star} \neq  j^{\star} ) \Big [ \sum_{t=1}^{\alpha T}(q_{j^{\star}} - q_{i_t}) +  \sum_{t=\alpha T + 1}^{ T}(q_{i^{\star}} - q_{i_t}) \big ]   \\
& \leq  6\mathbb{P}(i^{\star} = j^{\star} )  \sqrt{\alpha T \cdot T} +  \sum_{t=1}^{ T}(q_{i^{\star}} - q_{i_t})\mathbb{P}(i^{\star} \neq j^{\star})  \tag{From Observation \ref{obs:reg_moss} and since $q_{i^{\star}} \geq q_{j^{\star}}$ }\\
    & \leq  6   T  \sqrt{\alpha}  \cdot  \mathbb{P}(i^{\star} = j^{\star})     +  \mathbb{P}(i^{\star} \neq  j^{\star}) T  \tag{$ \because \sum_{t=1}^{ T}(q_{i^{\star}} - q_{i_t})    \leq T$ }\\
    & =  6   T  \sqrt{\alpha}  \cdot  \mathbb{P}(t_{i^{\star}} \leq \alpha T)    +  (1-\mathbb{P}(t_{i^{\star}} \leq \alpha \cdot T)) T \\
    & = T(1 - (1 - 6 \cdot \sqrt{\alpha}) \mathbb{P}(t_{i^{\star}} \leq \alpha T))  \\
        & = T(1 - (1- 6 \cdot \sqrt{\alpha}) F_{X}(\alpha  T))
\end{align*}
Note that the above inequality holds for any  \bmab\  instance and hence we have $\mathcal{R}_{ \alg}( T) =  \sup_{\mathcal{I}} \mathcal{R}_{ \alg}( T, \mathcal{I})  \leq T(1 - (1- 6 \cdot \sqrt{\alpha}) F_{X}(\alpha  T)) $
\end{proof}

\subsubsection{Sub-exponential tail distribution}

We now show that under the sub-exponential tail property on $X$, \Alg achieves sub-linear regret. We begin with the following  lemma that lower bounds the probability of the arrival of the best quality arm in the initial $\alpha T$ rounds.
\begin{lemma}
 \label{lem:lemmaTwo}
 Let the arm arrival distribution of the best arm satisfy sub-exponential tail property  for some $\lambda \geq  0$. Then for any $c>0$ and $\alpha \geq  \frac{W(\lambda T/c)}{ \lambda T/c}$, we have that  $F_X(\alpha T ) > (1 - \alpha^{c}) $.
 \end{lemma}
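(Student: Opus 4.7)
The plan is to peel off the sub-exponential tail bound and then reduce the claim to a clean inequality involving $\alpha$, $\lambda T$ and $c$ that is exactly calibrated by the defining equation of the Lambert $W$ function.

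First I would invoke the sub-exponential tail property at $t=\alpha T$ to obtain
\[
1-F_X(\alpha T) \;<\; e^{-\lambda \alpha T}.
\]
Hence it suffices to show that $e^{-\lambda \alpha T} \leq \alpha^c$, equivalently (taking logarithms) that $\lambda \alpha T \geq c\log(1/\alpha)$. So the whole lemma reduces to verifying this scalar inequality for every $\alpha \geq \alpha_0 := W(\lambda T/c)/(\lambda T/c)$.

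Next I would evaluate the inequality at the boundary $\alpha=\alpha_0$. By Property~\ref{prop:zero}, $W(x)e^{W(x)}=x$; applying this at $x=\lambda T/c$ gives
\[
e^{W(\lambda T/c)} \;=\; \frac{\lambda T/c}{W(\lambda T/c)} \;=\; \frac{1}{\alpha_0},
\]
so $W(\lambda T/c) = \log(1/\alpha_0)$. Multiplying through by $c$ and using $\alpha_0 \cdot (\lambda T/c) = W(\lambda T/c)$, this rearranges to $\lambda \alpha_0 T = c\log(1/\alpha_0)$, i.e.\ $e^{-\lambda \alpha_0 T} = \alpha_0^c$. So the desired inequality holds with equality at $\alpha=\alpha_0$.

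Finally I would extend to all $\alpha \geq \alpha_0$ (with $\alpha \in (0,1]$) by a monotonicity argument: as $\alpha$ increases, $e^{-\lambda \alpha T}$ is strictly decreasing while $\alpha^c$ is strictly increasing (for $c>0$), so the strict inequality $e^{-\lambda \alpha T} < \alpha^c$ persists for $\alpha>\alpha_0$, and holds with equality at $\alpha=\alpha_0$. Combined with the strict tail bound from the first step, this yields $1-F_X(\alpha T) < \alpha^c$, i.e.\ $F_X(\alpha T) > 1-\alpha^c$, as claimed. I do not expect any genuine obstacle here; the only subtle point is making sure to invoke the correct defining identity of $W$ (Property~\ref{prop:zero}) and the non-negativity/monotonicity from Property~\ref{prop:two} to guarantee $\alpha_0 \in (0,1]$ so that the monotonicity step is legitimate.
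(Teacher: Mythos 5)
Your proposal is correct and takes essentially the same route as the paper: both arguments reduce the claim, via the sub-exponential tail bound at $t=\alpha T$, to the scalar inequality $e^{-\lambda\alpha T}\le\alpha^{c}$ and verify it from the defining identity of the Lambert $W$ function (the paper chains implications through $W$ and its monotonicity for all $\alpha\ge\alpha_{0}$ at once, whereas you check equality at the boundary $\alpha_{0}$ and extend by monotonicity of the two sides --- a cosmetic difference). The only nit is that the identity $W(x)e^{W(x)}=x$ you invoke is Definition~\ref{def:lambert} rather than Property~\ref{prop:zero}, which states the inverse form $W(xe^{x})=x$.
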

\begin{proof}
Note that from the Property \ref{prop:one} of the  Lambert W function we have  $ \frac{\log(x)}{2}\leq W(x) \leq \log(x) $ for $x \geq e$. We have,
\begin{align*}
\alpha & \geq \frac{ W(\lambda T/c)}{\lambda T/c} \implies \frac{ \alpha \lambda T }{c}\geq W(\lambda  T /c)
  \implies    W \big ( \frac{\alpha \lambda T}{c} \cdot e^{ \alpha \lambda T/c} \big ) \geq W(\lambda T/c)  \tag{ by Property \textbf{\ref{prop:zero}}} \\
 \implies & \frac{\alpha \lambda T}{c}  \cdot e^{ \alpha \lambda T /c} \geq \lambda T /c 
\end{align*}
So, we have $1 - \alpha^{c} \leq 1 - e^{ - \lambda ( \alpha  T)} < F_X(\alpha T)$. The last inequality follows from the sub-exponential tail property.
\end{proof}

    \begin{restatable}{theorem}{forExponential}
\label{thm:sublinearTwo}
 Let the arrival distribution of the best arm  satisfy the  sub-exponential tail property for some $\lambda \geq 0$, and let $T$ be large enough such that $T>\frac{36 c \log(36)}{\lambda}$ for some $c>0$. Then with $\alpha = \frac{W(\lambda T /c)}{\lambda T /c}$, the upper bound on the expected regret of \alg,   $\mathcal{R}_{\Alg}(T)$, is   \mbox{$ O \Big(T \cdot \max\big( e^{-c W(\lambda T /c)},  e^{- \frac{W(\lambda T /c)}{2} } \big ) \Big)$}. The upper bound on the expected regret is minimized when  $c = 1/2$ and is given  by $O \Big(\sqrt{\frac{T \log(2\lambda T)}{2\lambda}}\Big)$.
 \end{restatable}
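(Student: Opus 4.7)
The plan is to combine Lemma~\ref{lem:regret} and Lemma~\ref{lem:lemmaTwo} through a short algebraic manipulation, and then use the identity $W(x)/x = e^{-W(x)}$ (which is immediate from the defining relation $W(x)e^{W(x)}=x$) to convert the ratio-form bound into an exponential-form bound that matches the statement.

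First, I would apply Lemma~\ref{lem:regret} to get the general bound
\[
\mathcal{R}_{\Alg}(T) \;\leq\; T\bigl(1 - (1 - 6\sqrt{\alpha})\,F_X(\alpha T)\bigr).
\]
Since $\alpha = W(\lambda T/c)/(\lambda T/c)$ is chosen to exactly meet the hypothesis of Lemma~\ref{lem:lemmaTwo}, we have $F_X(\alpha T) > 1 - \alpha^{c}$. Plugging this in and expanding,
\[
\mathcal{R}_{\Alg}(T) \;\leq\; T\bigl(\alpha^{c} + 6\sqrt{\alpha} - 6\sqrt{\alpha}\,\alpha^{c}\bigr) \;\leq\; T\bigl(\alpha^{c} + 6\sqrt{\alpha}\bigr).
\]
The role of the hypothesis $T > 36 c \log(36)/\lambda$ is to guarantee $\alpha \le 1/36$, hence $1-6\sqrt{\alpha}\ge 0$, so that the product in Lemma~\ref{lem:regret} is non-trivially lower-bounded. (The check is that $W(36\log 36)=\log 36$, so $\lambda T/c \ge 36\log 36$ forces $\alpha = e^{-W(\lambda T/c)} \le 1/36$.)

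Next, I would rewrite $\alpha$ in exponential form. From Property~\ref{prop:zero}, $W(x)e^{W(x)}=x$, which gives $\alpha = W(\lambda T/c)/(\lambda T/c) = e^{-W(\lambda T/c)}$. Therefore $\alpha^{c} = e^{-cW(\lambda T/c)}$ and $\sqrt{\alpha} = e^{-W(\lambda T/c)/2}$, so
\[
\mathcal{R}_{\Alg}(T) \;\leq\; T\bigl(e^{-cW(\lambda T/c)} + 6\,e^{-W(\lambda T/c)/2}\bigr) \;=\; O\!\Big(T\cdot\max\bigl(e^{-cW(\lambda T/c)},\, e^{-W(\lambda T/c)/2}\bigr)\Big),
\]
which is the first part of the claim.

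Finally, to justify the optimal choice $c=1/2$, I would note that the two exponents $cW(\lambda T/c)$ and $W(\lambda T/c)/2$ balance precisely when $c=1/2$; picking $c<1/2$ makes the first (``external'') term dominate while picking $c>1/2$ makes the second (``internal'') term dominate (and also decreases $W(\lambda T/c)$ through the argument $\lambda T/c$), so the worst of the two is minimized at $c=1/2$. Substituting $c=1/2$ gives $\alpha = W(2\lambda T)/(2\lambda T)$, and both terms collapse to a constant multiple of $\sqrt{\alpha}$. Applying the upper half of Property~\ref{prop:one}, $W(2\lambda T)\le \log(2\lambda T)$, yields
\[
\mathcal{R}_{\Alg}(T) \;\leq\; 7T\sqrt{\alpha} \;\leq\; 7T\sqrt{\frac{\log(2\lambda T)}{2\lambda T}} \;=\; O\!\left(\sqrt{\frac{T\log(2\lambda T)}{2\lambda}}\right).
\]
There is no real obstacle here — the proof is essentially algebraic once Lemmas~\ref{lem:regret} and~\ref{lem:lemmaTwo} are in hand; the only subtlety is making sure the ``$(1-6\sqrt{\alpha})$'' factor is non-negative (handled by the $T$ lower bound) and arguing cleanly that $c=1/2$ equalizes the two exponential terms so that no other choice improves the asymptotic rate.
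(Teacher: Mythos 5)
Your proposal is correct and follows essentially the same route as the paper: combine Lemma~\ref{lem:regret} with Lemma~\ref{lem:lemmaTwo}, use $W(x)e^{W(x)}=x$ to write $\alpha=e^{-W(\lambda T/c)}$, split into the regimes where $\alpha^c$ versus $\sqrt{\alpha}$ dominates, and bound $W(2\lambda T)$ by $\log(2\lambda T)$ via Property~\ref{prop:one}. The only point stated more loosely than in the paper is the optimality of $c=1/2$ on the side $c<1/2$: there you still need that the dominant term $e^{-cW(\lambda T/c)}$ is itself decreasing in $c$ (the paper's Claim~\ref{clm:decreasing}), not merely that it dominates $e^{-W(\lambda T/c)/2}$.
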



\begin{proof}
From Lemma \ref{lem:lemmaTwo}, we have  $ F_X(\alpha T) > 1 - \alpha^c $ for all $\alpha \geq \frac{W(\lambda T /c)}{\lambda T /c}$. 
Thus, from Lemma \ref{lem:regret}, we have $\mathcal{R}_{\alg}(T) < T(1- (1- 6 \cdot \sqrt{\alpha}) (1 - \alpha^{c}))$.

\vspace{2mm}
Note that for achieving sub-linear regret, it is necessary that $(1- 6\cdot\sqrt{\alpha}) $ is strictly  positive, for which it is necessary that  $\alpha < 1/36 $. From Lemma \ref{lem:lemmaTwo}, we also have  $\alpha \geq \frac{W(\lambda T /c)}{\lambda T/c}$.  Since such a feasible $\alpha$ may not exist for small values of $T$, we consider that $T$ is large enough.
It can be easily shown that $\frac{W(\lambda T /c)}{\lambda T/c} < 1/36 \Longleftrightarrow T > \frac{36 c \log(36)}{\lambda } \approx \frac{129 c}{\lambda}$
(see Claim~\ref{clm:Tbound} in Appendix~\ref{sec:BL-appendix}).
%

\vspace{2mm}
 Thus,  for $ 1/36 > \alpha \geq \frac{W(\lambda T /c)}{\lambda T /c}$, we have:   $\mathcal{R}_{\alg}(T) < T( 6 \cdot \sqrt{\alpha} + \alpha^c - 6\cdot\alpha^{c + 1/2} )$.
   Recall that by definition, we have $\alpha \leq 1 $. Thus when $c \in (0,1/2]$, the term $\alpha^{c}$ dominates the other terms in the regret expression, whereas when $c > 1/2$, the term $\sqrt{\alpha}$   dominates. We analyze these cases separately.

\vspace{2mm}
\noindent
\textbf{Case 1 ($c\in(0, 1/2]$): } In this case, the regret is given by $\mathcal{R}_{\Alg}(T) =  O(\alpha^{c} T )$. Note that the regret is minimized for the lowest feasible value of $\alpha$, i.e., $\alpha = \frac{W(\lambda T /c)}{\lambda T /c}$, resulting in $\mathcal{R}_{\Alg}(T) = O\Big(T \big  ( \frac{W(\lambda T /c)}{\lambda T /c}\big)^{c}\Big) = O(T \cdot e^{-c W(\lambda T /c)})$. The last equality follows from the equivalent definition of Lambert W function (Property \ref{prop:zero}).

\vspace{2mm}
\noindent
\textbf{Case 2 ($c\in[1/2,\infty )$): }
 In this case, the regret is given by $\mathcal{R}_{\Alg}(T) =  O(\sqrt{\alpha} T )$. Again, the regret is minimized when  $\alpha =  \frac{W(\lambda T /c)}{\lambda T /c}$. The regret in this case is given by $\mathcal{R}_{\Alg}(T) =  O\Big( T\cdot\sqrt{\frac{W(\lambda T /c)}{\lambda T/c}}\Big) = O \big(T \cdot e^{\frac{-W(\lambda T/c )}{2}})$.

\vspace{1mm}
 Further, we have that    in   Case 1, $ e^{-c W(\lambda T /c)} >  e^{\frac{- W(2 \lambda T )}{2} } $ for any $c\in(0,1/2)$ (see Claim \ref{clm:decreasing} in Appendix~\ref{sec:BL-appendix}). For  Case 2, we have from Property  \ref{prop:two} that, $W(\lambda T /c)$ is decreasing in $c$, which gives us that $e^{\frac{- W(2 \lambda T )}{2}} < e^{\frac{- W( \lambda T/c )}{2}}$
 for any $c \in  (1/2, \infty)$. This  shows that the minimum regret is achieved when $c = 1/2$, and the regret is given by
 $\mathcal{R}_{\Alg}(T) =  O\Big( \sqrt{\frac{T \cdot W(2\lambda T )}{2\lambda }}\Big) =  O\Big( \sqrt{\frac{T  \log(2\lambda T )}{2\lambda }}\Big)$.
 The last inequality follows from Property \ref{prop:one}, since $2 \lambda T \geq e$ $(\because T > \frac{36 c \log(36)}{\lambda} \text{ where } c=1/2)$.
 %
\end{proof}

If we absorb $\lambda$ (which is a constant with respect to $T$) in order notation, we have
    $R_{\Alg} = O(\sqrt{T \log(T)})$.


\subsubsection{\SUBP tail distribution}

We now prove the sub-linear regret of \Alg
under the \Subp tail property.
\begin{restatable}{lemma}{lemmaOne}
\label{lem:lemmaOne}
Let the arm arrival distribution of the best arm satisfy \Subp tail property for some $\beta>0$.  Then for any $c>0$ and $\alpha \geq  T^{\frac{-\beta}{c+\beta}}$, we have that  $F_X(\alpha T ) > (1 - \alpha^{c}) $.
\end{restatable}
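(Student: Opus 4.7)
The plan is to prove this by a short direct calculation: apply the \Subp tail assumption at the point $t = \alpha T$, and then reduce the desired inequality $(\alpha T)^{-\beta} \le \alpha^{c}$ to the given threshold condition on $\alpha$ by elementary algebra.

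More concretely, the first step is to invoke the hypothesis $F_{X}(t) > 1 - t^{-\beta}$ with $t = \alpha T$, which yields
\[
F_{X}(\alpha T) > 1 - (\alpha T)^{-\beta}.
\]
So it suffices to show $(\alpha T)^{-\beta} \le \alpha^{c}$, i.e., to show the gap between the tail bound and the target bound $1 - \alpha^{c}$ is nonnegative.

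Next, I would rearrange $(\alpha T)^{-\beta} \le \alpha^{c}$ as $T^{-\beta} \le \alpha^{c+\beta}$, which (since $c + \beta > 0$) is equivalent to $\alpha \ge T^{-\beta/(c+\beta)}$. This is exactly the hypothesis on $\alpha$, so the required inequality holds. Combining with the first step gives $F_{X}(\alpha T) > 1 - \alpha^{c}$, as claimed.

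There is essentially no obstacle here: the lemma is an algebraic consequence of the tail condition, and the threshold $T^{-\beta/(c+\beta)}$ is defined precisely to make the two upper bounds on the tail probability match. The only care needed is to check that monotonicity of $x \mapsto x^{c+\beta}$ on $(0,\infty)$ (used in the rearrangement) is valid, which it trivially is for $c,\beta > 0$. This is structurally the \Subp analogue of Lemma~\ref{lem:lemmaTwo}, with the Lambert $W$ machinery replaced by a simple power-law inversion.
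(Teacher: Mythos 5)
Your argument is correct and is essentially identical to the paper's own proof: both rewrite the hypothesis $\alpha \geq T^{-\beta/(c+\beta)}$ as $\alpha^{c} \geq (\alpha T)^{-\beta}$ and then chain this with the \Subp tail bound $F_X(\alpha T) > 1 - (\alpha T)^{-\beta}$. No issues.
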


\begin{proof}
First note that $ \alpha \geq T^{\frac{- \beta}{c + \beta}} \iff \alpha^c \geq (\alpha T)^{-\beta}$. This implies that $(1 - \alpha^c) \leq 1 - (\alpha T)^{-\beta}$. Further, from the \Subp tail property, we have that  $1 - (\alpha T)^{-\beta} < F_X(\alpha T)$.
\end{proof}

\begin{restatable}{theorem}{forSublinear}
\label{thm:sublinearOne}
Let the arrival distribution of arms satisfy the  \Subp tail property for some $\beta>0$, and let $T$ be large enough such that $T > (36)^{\frac{c+\beta}{\beta}}$ for some $c>0$. Then with $\alpha =   T^{\frac{-\beta}{\beta+c}}$, the upper bound on the expected regret of \mbox{\alg},  $\mathcal{R}_{\Alg}(T) $,  is $O(\max(T^{\frac{c + \beta(1-c)}{c+\beta}}, T^{\frac{2c+\beta}{2(c+\beta)}}))$. The upper bound on the expected regret is minimized when  $c = 1/2$ and is given  by $O(T^{\frac{1+\beta}{1+2\beta}})$.
    \end{restatable}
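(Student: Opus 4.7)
The plan closely mirrors the proof of Theorem~\ref{thm:sublinearTwo}, with Lemma~\ref{lem:lemmaOne} replacing Lemma~\ref{lem:lemmaTwo}. First I would apply Lemma~\ref{lem:lemmaOne} to obtain $F_X(\alpha T) > 1 - \alpha^c$ whenever $\alpha \geq T^{-\beta/(c+\beta)}$, and plug this into the regret bound of Lemma~\ref{lem:regret} to get
\[
\mathcal{R}_{\Alg}(T) < T\big(1 - (1 - 6\sqrt{\alpha})(1 - \alpha^c)\big) = T\big(6\sqrt{\alpha} + \alpha^c - 6\alpha^{c+1/2}\big).
\]
For this to be non-vacuous the coefficient $1 - 6\sqrt{\alpha}$ must be positive, so I need $\alpha < 1/36$; combined with the lower-bound constraint $\alpha \geq T^{-\beta/(c+\beta)}$, this forces $T > 36^{(c+\beta)/\beta}$, which is exactly the assumption in the statement.

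Next I would choose the smallest feasible value $\alpha = T^{-\beta/(c+\beta)}$ (which minimises the dominant positive terms) and compute the two candidate dominating quantities:
\[
T\sqrt{\alpha} = T^{(2c+\beta)/(2(c+\beta))}, \qquad T\alpha^c = T^{(c + \beta(1-c))/(c+\beta)}.
\]
Since $\alpha < 1$, the term $\alpha^c$ dominates $\sqrt{\alpha}$ precisely when $c \leq 1/2$, and $\sqrt{\alpha}$ dominates when $c \geq 1/2$. I would split into the two cases exactly as in Theorem~\ref{thm:sublinearTwo}, obtaining the $\max$-bound claimed in the theorem statement.

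To complete the optimisation over $c$, I would analyse the two exponent functions separately. Writing $f(c) = (c + \beta(1-c))/(c+\beta) = 1 - c\beta/(c+\beta)$, a short monotonicity check shows $f$ is decreasing on $(0, 1/2]$, so Case~1 is optimised at $c = 1/2$. Writing $g(c) = (2c+\beta)/(2(c+\beta)) = 1 - \beta/(2(c+\beta))$, $g$ is increasing on $[1/2,\infty)$, so Case~2 is optimised at $c = 1/2$ as well. A direct evaluation then gives $f(1/2) = g(1/2) = (1+\beta)/(1+2\beta)$, yielding the stated bound $\mathcal{R}_{\Alg}(T) = O(T^{(1+\beta)/(1+2\beta)})$.

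There is no real obstacle here beyond routine algebra; the proof is structurally parallel to the sub-exponential case, and the only slightly delicate point is verifying that the two monotonicities meet at $c = 1/2$ with matching values, which guarantees that $c = 1/2$ is the global minimiser of $\max(f, g)$ rather than merely a local one in each case.
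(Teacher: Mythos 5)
Your proposal is correct and follows essentially the same route as the paper's proof: combine Lemma~\ref{lem:lemmaOne} with Lemma~\ref{lem:regret}, take the smallest feasible $\alpha = T^{-\beta/(c+\beta)}$, split into the cases $c \le 1/2$ and $c \ge 1/2$ according to which of $\alpha^c$ and $\sqrt{\alpha}$ dominates, and optimise over $c$. Your explicit monotonicity check of the two exponent functions and the verification that they agree at $c=1/2$ is slightly more careful than the paper's "it is easy to see", but the argument is the same.
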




\begin{proof}
From Lemmas \ref{lem:regret} and \ref{lem:lemmaOne}, we have $\mathcal{R}_{\alg}(T) < T(1- (1-  6 \cdot \sqrt{\alpha}) (1 - \alpha^c))$.
For achieving sub-linear regret, it is necessary that $(1- 6\cdot\sqrt{\alpha}) $ is strictly  positive. So, we should have  $\alpha < 1/36 $.
Further, from Lemma \ref{lem:lemmaOne}, we have  $\alpha \geq T^{\frac{-\beta}{c+\beta}}$.
So, for a feasible $\alpha$ to exist, it is necessary that $T^{\frac{-\beta}{c+\beta}} < 1/36 \Longleftrightarrow T > (36)^{\frac{c+\beta}{\beta}}$, i.e., $T$ is large enough.
  Thus, for $1/36 > \alpha \geq T^{\frac{-\beta}{c +   \beta}}$, we have
    $\mathcal{R}_{\alg}(T) <  T( 6 \cdot  \sqrt{\alpha} + \alpha^c - 6 \cdot  \alpha^{c + 1/2 })$.
    As earlier, we analyze  two cases.
    %

\vspace{2mm}
\noindent
\textbf{Case 1 ($c\in(0, 1/2]$): } In this case, the regret is given by $\mathcal{R}_{\Alg}(T) =  O(\alpha^{c} T )$. The minimum regret is obtained when $\alpha = T^{\frac{-\beta}{\beta + c}}$ and is given by $  O( T^{1 -\frac{c\beta}{c+\beta}})$.

\vspace{2mm}
\noindent
\textbf{Case 2 ($c\in[1/2,\infty )$): }
 In this case, the regret is given by $\mathcal{R}_{\Alg}(T) =  O(\sqrt{\alpha} T )$. Again, the regret is minimum when $\alpha = T^{\frac{-\beta}{\beta + c}}$ and is given by $  O( T^{\frac{2c +\beta}{2(c+\beta)}})$.

\vspace{1mm}
 Furthermore, it is easy to see that  in Case 1, $T^{\frac{1+\beta}{1+2\beta}} < T^{\frac{\beta + c(1-\beta)}{c+\beta}} $ for any $c\in(0,1/2)$. Similarly, in Case 2,
 $T^{\frac{1+\beta}{1+2\beta}} <   T^{\frac{2c + \beta }{2(c+\beta)}}$ for any $c \in (1/2, \infty)$. This  shows that the minimum regret is achieved when $c = 1/2$.
\end{proof}

\subsection{Important Observations}
We conclude the section with some key observations and remarks.
\begin{observation}
\label{obs:lambdalimits}
If the best arm arrival satisfies sub-exponential tail property with parameter $\lambda$, then
\begin{enumerate}
    \item $\mathcal{R}_{\textsc{BL-Moss}}(T) \rightarrow 0$ as $\lambda \rightarrow \infty$
    \item $\mathcal{R}_{\textsc{BL-Moss}}(T) \rightarrow O(T)$ as $\lambda \rightarrow 0$
\end{enumerate}
\end{observation}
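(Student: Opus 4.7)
The plan is to read off both limits from the closed-form upper bound of Theorem~\ref{thm:sublinearTwo}, $\mathcal{R}_{\alg}(T)=O\bigl(\sqrt{T\log(2\lambda T)/(2\lambda)}\bigr)$, combined with the general estimate of Lemma~\ref{lem:regret} whenever the former is inapplicable. Both claims are statements about a single real variable $\lambda$ with $T$ held fixed, so the proof reduces to an elementary asymptotic analysis in each regime.

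For part~(1), fix $T$ and let $\lambda\to\infty$. The precondition $T>72\log(36)/\lambda$ of Theorem~\ref{thm:sublinearTwo} (using $c=1/2$) is satisfied for all sufficiently large $\lambda$, so the closed-form bound applies. Since $\log$ is sublinear, $\log(2\lambda T)/\lambda=(\log 2T+\log\lambda)/\lambda\to 0$ as $\lambda\to\infty$, and hence $\sqrt{T\log(2\lambda T)/(2\lambda)}\to 0$. Intuitively, in this limit the best arm arrives essentially at $t=1$, while the algorithm chooses $\alpha=W(\lambda T)/(\lambda T)\to 0$, so the pool of arms the learner must disentangle shrinks to a negligible fraction of $T$ yet still contains the best arm with probability tending to~$1$.

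For part~(2), let $\lambda\to 0$ with $T$ fixed. The precondition of Theorem~\ref{thm:sublinearTwo} now fails (for small $\lambda$, $2\lambda T<1$ and the closed-form bound is not even well defined), so I fall back on Lemma~\ref{lem:regret}: $\mathcal{R}_{\alg}(T)\le T\bigl(1-(1-6\sqrt{\alpha})F_X(\alpha T)\bigr)$. As $\lambda\to 0$ the sub-exponential tail hypothesis $F_X(t)\ge 1-e^{-\lambda t}$ degenerates to the vacuous $F_X(t)\ge 0$, while the algorithm's choice $\alpha=W(2\lambda T)/(2\lambda T)\to 1$ (using $W(x)\sim x$ near~$0$, by Property~\ref{prop:zero}) drives $(1-6\sqrt{\alpha})$ into the negative range. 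The Lemma~\ref{lem:regret} bound then collapses to at most $T(1+5F_X(\alpha T))\le 6T$, i.e.\ $O(T)$. This is also consistent with the trivial per-round bound $\mathcal{R}_{\alg}(T)\le T$ (rewards lie in $[0,1]$) and with the matching linear lower bound of Theorem~\ref{thm:impossibility} in the uniform-arrival limit $\lambda=0$.

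The only mild obstacle is keeping track of which regime of $\lambda$ validates which bound, and verifying the small-$x$ behavior $W(x)/x\to 1$ that justifies $\alpha\to 1$ in part~(2); beyond that, nothing more than elementary asymptotics of $\log$ and of Lambert~$W$ is needed.
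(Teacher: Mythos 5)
Your proof is correct, but it takes a genuinely different route from the paper's in both parts. For part (1), you read the limit directly off the closed-form bound of Theorem~\ref{thm:sublinearTwo}; the paper instead observes that once $\lambda$ is large enough that $W(2\lambda T)/(2\lambda T)\le 1/T$, the algorithm retains only a single arm ($\lceil\alpha T\rceil=1$), so the internal regret vanishes identically and the external regret is at most $T(1-F_X(1))\le Te^{-\lambda}$. Both arguments give the limit, but the paper's structural observation yields the much sharper rate $Te^{-\lambda}$ versus your $O\bigl(\sqrt{T\log(2\lambda T)/(2\lambda)}\bigr)$ (incidentally, the threshold for $c=1/2$ is $18\log(36)/\lambda$, not $72\log(36)/\lambda$, though this is immaterial as $\lambda\to\infty$). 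For part (2), the bound you extract from Lemma~\ref{lem:regret} --- at most $6T$ once $1-6\sqrt{\alpha}$ turns negative --- is weaker than the trivial bound $\mathcal{R}_{\alg}(T)\le T$ that holds for every $\lambda$ because qualities lie in $[0,1]$, so as an upper bound the claim ``$O(T)$'' carries no content. The substance of part (2), and what the paper actually argues, is that as $\lambda\to 0$ the tail hypothesis degenerates to admit the uniform arrival distribution, whence by Theorem~\ref{thm:impossibility} linear regret is \emph{unavoidable}. You do cite this matching lower bound, but only as a consistency check; it should be the main argument, with the upper bound being immediate.
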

\begin{proof}
Recall that, from Equation~(\ref{eqn:int_ext_regret}), we have
\begin{align*}
&  \mathcal{R}_{\textsc{BL-Moss}}(T) = \\
        & \mathbb{P}(i^{\star} = j^{\star})  \Big [\underbrace{  \sum_{t=1}^{t_{j^{\star}} -1}\Delta(i_t^{\star}, i_t) +  \sum_{t= t_{j^{\star}}}^{T }\Delta(j^{\star}, i_t)  }_{\mathcal{R}_{\Alg}^{\text{int}}(  T)} \Big ]
       + \mathbb{P}(i^{\star } \neq j^{\star}) \underbrace{ \Big[  \sum_{t= 1}^{t_{i^{\star}}-1 } \Delta(i_t^{\star}, i_{t}) + \sum_{t= t_{i^{\star}}}^{T }\Delta({i^{\star}, i_t})\Big ]   }_{\mathcal{R}_{\Alg}^{\text{ext}}(T)}
\end{align*}
Note that, for large enough  $\lambda $ such that $ \frac{W(2\lambda T)}{2\lambda T} \leq \frac{1}{T}$, we have that  $\lceil \alpha T \rceil = 1$ and the algorithm pulls arm $1$ at all times i.e. $i_t = 1$ for all $t\leq T $. Furthermore, as $\lceil \alpha T \rceil = 1$, we have $j^{\star} =1$. Hence,  the internal regret  is zero.   The total regret of the algorithm is, hence,
\begin{align*}
    \mathcal{R}_{\textsc{BL-Moss}}(T) &=  \mathcal{R}_{\textsc{BL-Moss}}^{\text{ext}}(T)  \\ &=  \mathbb{P}(i^{\star } \neq 1)\Big[  \sum_{t= 1}^{t_{i^{\star}}-1 } \Delta(i_t^{\star}, 1 ) + \sum_{t= t_{i^{\star}}}^{T }\Delta({i^{\star}, 1})\Big ]  \\  & \leq   \mathbb{P}(i^{\star } \neq 1)   \sum_{t =1}^{T}  \Delta(i^{\star}, 1 ) \tag{$\because \Delta(i_t^{\star}, 1) \leq \Delta(i^{\star},1)$} \\  & \leq T (1 - F_{X}(1)) \\ & \leq Te^{-\lambda} \approx 0 \tag{$\because \lambda \rightarrow \infty$}
\end{align*}
The last inequality follows from the sub-exponential tail assumption.

If $\lambda \rightarrow 0$, we have that $F_X(t)  > 1 - e^{-\lambda t } \rightarrow 0$. Hence, the arrival distribution of the best arm captures the uniform distribution, i.e., $F_X(t)= \frac{1}{T}$ as well. Hence, from Theorem 1, we have that a regret of $O(T)$  is unavoidable.
\end{proof}

\begin{observation}
If the best arm arrival satisfies sub-Pareto tail property with parameter $\beta$, then
\begin{enumerate}
    \item $\mathcal{R}_{\textsc{BL-Moss}}(T) \rightarrow O(\sqrt{T})$ as $\beta \rightarrow \infty$
    \item $\mathcal{R}_{\textsc{BL-Moss}}(T) \rightarrow O(T)$ as $\beta \rightarrow 0$
\end{enumerate}
\end{observation}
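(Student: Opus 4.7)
The approach is direct: both limits follow from the explicit regret bound $\mathcal{R}_{\Alg}(T) = O(T^{(1+\beta)/(1+2\beta)})$ already proved in Theorem \ref{thm:sublinearOne}. The plan is to evaluate the limit of the exponent $(1+\beta)/(1+2\beta)$ in each regime, after first confirming that the hidden constant in the $O(\cdot)$ is universal (does not depend on $\beta$ in a way that would spoil the asymptotic statements).

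First I would revisit the proof of Theorem \ref{thm:sublinearOne} with $c = 1/2$. There, the regret bound takes the form $\mathcal{R}_{\Alg}(T) < T(6\sqrt{\alpha} + \sqrt{\alpha} - 6\alpha) \leq 7\, T \sqrt{\alpha}$ with $\alpha = T^{-2\beta/(1+2\beta)}$, which simplifies to $\mathcal{R}_{\Alg}(T) \leq 7\, T^{(1+\beta)/(1+2\beta)}$. The constant $7$ is independent of $\beta$, which is what makes the limiting statements meaningful and independent of how $\beta$ varies.

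For Part 1, rewriting the exponent as $(1+\beta)/(1+2\beta) = (1 + 1/\beta)/(2 + 1/\beta)$ and letting $\beta \to \infty$ yields an exponent tending to $1/2$, so $\mathcal{R}_{\Alg}(T) \to O(\sqrt{T})$. This is also intuitive: $\alpha = T^{-2\beta/(1+2\beta)} \to T^{-1}$, so the algorithm explores only the first arriving arm, which is overwhelmingly likely to be the best arm under a heavy concentration of the arrival distribution near $t=1$; the internal regret is then trivially zero and the external regret vanishes.

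For Part 2, the same exponent computation yields $(1+\beta)/(1+2\beta) \to 1$ as $\beta \to 0$, so the bound degrades to $O(T)$. The main obstacle is that Theorem \ref{thm:sublinearOne} formally requires $T > 36^{(c+\beta)/\beta}$, which blows up as $\beta \to 0$, so the upper-bound formula does not literally apply for arbitrarily small $\beta$. I would handle this by noting that as $\beta \to 0$ the \Subp tail constraint $F_X(t) > 1 - t^{-\beta}$ degenerates to the trivial requirement $F_X(t) > 0$, which admits in particular the uniform best-arm arrival used to prove Theorem \ref{thm:impossibility}; combined with the elementary upper bound $\mathcal{R}_{\Alg}(T) \leq T$ (per-round loss is in $[0,1]$), this pins the regret to $\Theta(T)$ in the limit $\beta \to 0$, so in particular $\mathcal{R}_{\Alg}(T) = O(T)$.
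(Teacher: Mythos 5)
Your proof is correct, but Part~1 takes a genuinely different route from the paper. The paper does not pass to the limit of the exponent in Theorem~\ref{thm:sublinearOne}; instead it argues directly from the internal/external decomposition: under the \Subp tail, $1-F_X(2)<2^{-\beta}\to 0$, so the best arm arrives among the first two arms with probability tending to $1$, the external regret term $\sum_{t\ge 3}\mathbb{P}(i_t=i^{\star})(T-t)\le (T-3)2^{-\beta}$ vanishes, and the internal regret is that of \Moss\ on (at most) two arms, i.e.\ $6\sqrt{2T}$, giving $O(\sqrt{T})$. Your route --- writing the bound from Theorem~\ref{thm:sublinearOne} as $7\,T^{(1+\beta)/(1+2\beta)}$, checking that the constant $7$ is independent of $\beta$ and that the feasibility threshold $36^{(c+\beta)/\beta}\to 36$ as $\beta\to\infty$, and then letting the exponent tend to $1/2$ --- is more economical because it reuses the already-proved theorem, while the paper's argument is more self-contained and makes the mechanism explicit (external regret dies because the best arm is almost surely among the first two explored arms). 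For Part~2 you essentially reproduce the paper's argument (which defers to the second part of Observation~\ref{obs:lambdalimits}): the tail condition degenerates to a trivial constraint admitting the uniform arrival of Theorem~\ref{thm:impossibility}, and combined with the trivial bound $\mathcal{R}(T)\le T$ this pins the regret at $\Theta(T)$. You additionally flag, correctly, that the condition $T>36^{(c+\beta)/\beta}$ blows up as $\beta\to 0$ so the upper-bound formula cannot be applied literally in that regime --- a caveat the paper glosses over --- and your workaround is sound.
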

\begin{proof}
Note that  under sub-Pareto tail assumption, we have  $1 - F_{X}(2) < 2^{-\beta}$. Hence, as $\beta \rightarrow \infty$, we have that $ F_{X}(2) \rightarrow 1$. The internal regret of Moss is upper bounded by $6 \sqrt{2T}$, whereas the external regret is given as
$$\mathcal{R}_{\textsc{BL-Moss}}^{\text{ext}}(T) \leq \sum_{t=3}^{T}\mathbbm{P}(i_t = i^{\star} ) (T-t) \leq (T-3) (1 - (1 - 2^{-\beta })) \approx 0 .$$ Hence, the total regret is $O(\sqrt{T})$.

The proof of the second part follows on similar line as the second part of  Observation \ref{obs:lambdalimits}.
\end{proof}

\section{Simulation Study}
\label{sec:BL-simulation}
So far, we focused on deriving upper bounds on regret for distributions (on the arrival time of the best arm) having sub-exponential and \Subp tail with different values of $\lambda$ and $\beta$, respectively.
In particular, for the case of \Subp tail, we deduced that the extent of sublinearity of the regret (the exponent of $T$ in the order of the regret) depends on the value of $\beta$. On the other hand, the upper bound on regret for the case of sub-exponential tail had the same order with respect to $T$ for any reasonable value of $\lambda$, albeit with different multiplicative and additive terms for different values of $\lambda$.
In this section, we aim to illustrate how the expected regret varies with the time horizon $T$, and how the empirical exponents compare with their theoretical bounds for different values of $\beta$ and $\lambda$, for time horizons up to $10^6$ rounds.

\subsection{Simulation Setup}

Note that in a traditional MAB setup, a simulation for a larger time horizon $T''$ could be conducted as an extension of a simulation for a smaller time horizon $T'<T''$. In other words, after obtaining the results for time horizon $T'$, the results for time horizon $T''$ can be obtained by running simulations for an additional $T''-T'$  rounds.
However, in the \bmab\ setup where new arms continue arriving with time and the desired time horizon is known, we have seen that the optimal value of $\alpha$ and hence $\lceil \alpha T \rceil$ depends on the time horizon.
Owing to different values of $\lceil \alpha T \rceil$ for different time horizons $T$, the simulation for a time horizon $T'$ are not extendable to time horizon $T''>T'$.
So even if we have simulation results for time horizon $T'$, it is necessary to run a fresh set of simulations for obtaining  results for time horizon $T''>T'$.
In our simulation study, we consider the following values of time horizon:
 $\{1,2,5,7\}\times 10^4, \{1,2,5,7\}\times 10^5, 10^6$.

We consider that a new arm arrives in each round, and the probability that the arm arriving at time $t$ is the best arm is determined by the distribution function $F_X(t)$.
Thereafter, this best arm ($i^{\star}$) is assigned a  quality ($q_{i^{\star}}$) between 0 and 1 uniformly at random, and the rest of the arms are assigned quality parameters between 0 and $q_{i^{\star}}$ uniformly at random.
%
Given a time horizon $T$, the value of $\alpha$ and hence $\lceil \alpha T \rceil$ are obtained based on our theoretical analysis.
The arm to be pulled in a round is determined by Algorithm~\ref{alg:B-MOSS}, wherein the pulled arm generates unit reward with probability equal to its quality, and no reward otherwise (i.e., as per Bernoulli distribution).
The regret in each round is computed as the difference between the quality of the best arm available in that round and the quality of the pulled arm. The overall regret is the sum of the regrets over all rounds from $1$ till $T$.
Note that we are concerned with the regret irrespective of the numerical values of the arms' qualities. So, for a given instance of the arrival of the best arm, we consider the worst-case regret over 50 sub-instances, where the quality parameters assigned to the arms in different sub-instances are independent of each other.
Also, since different instances would have the best arm arriving in different rounds, the expected regret is obtained by simulating over 1000 such random instances and averaging over the corresponding worst-case regret values.

Our primary objective is to observe how the expected regret varies with the time horizon $T$. In order to observe the influence of various sub-exponential and \Subp tail distributions over the arrival time of the best arm, we conduct simulations for different values of parameters $\lambda$ and $\beta$: $\{0.10,0.25,0.50,0.75,1,2,10\}$.
The other objective is to determine the empirical exponent of the plots (i.e., the value of $\gamma$ such that the expected regret is approximately a constant multiple of $T^\gamma$).
To achieve this, we first estimate the constant factor $\xi$ by dividing the expected regret for $T=10^6$ by $T^\gamma$, for a given value of $\gamma$.
We then compute the squared error when attempting to fit the expected regret with $\xi T^\gamma$. Considering candidate values of $\gamma$ to be between 0 and 1 with intervals of 0.01, we deduce the empirical exponent to be the value of $\gamma$ which results in the least squared error.
We also consider another method for determining the empirical exponent:
we produce the line of best fit for the scatter plot of $\log(T)$ versus the log of the expected regret for that $T$;
the slope of this line gives the empirical exponent.
The empirical exponents obtained using the two methods are almost identical (differing by less than 0.01).

\begin{figure}[t!]
	\centering
	\includegraphics[scale=0.6]{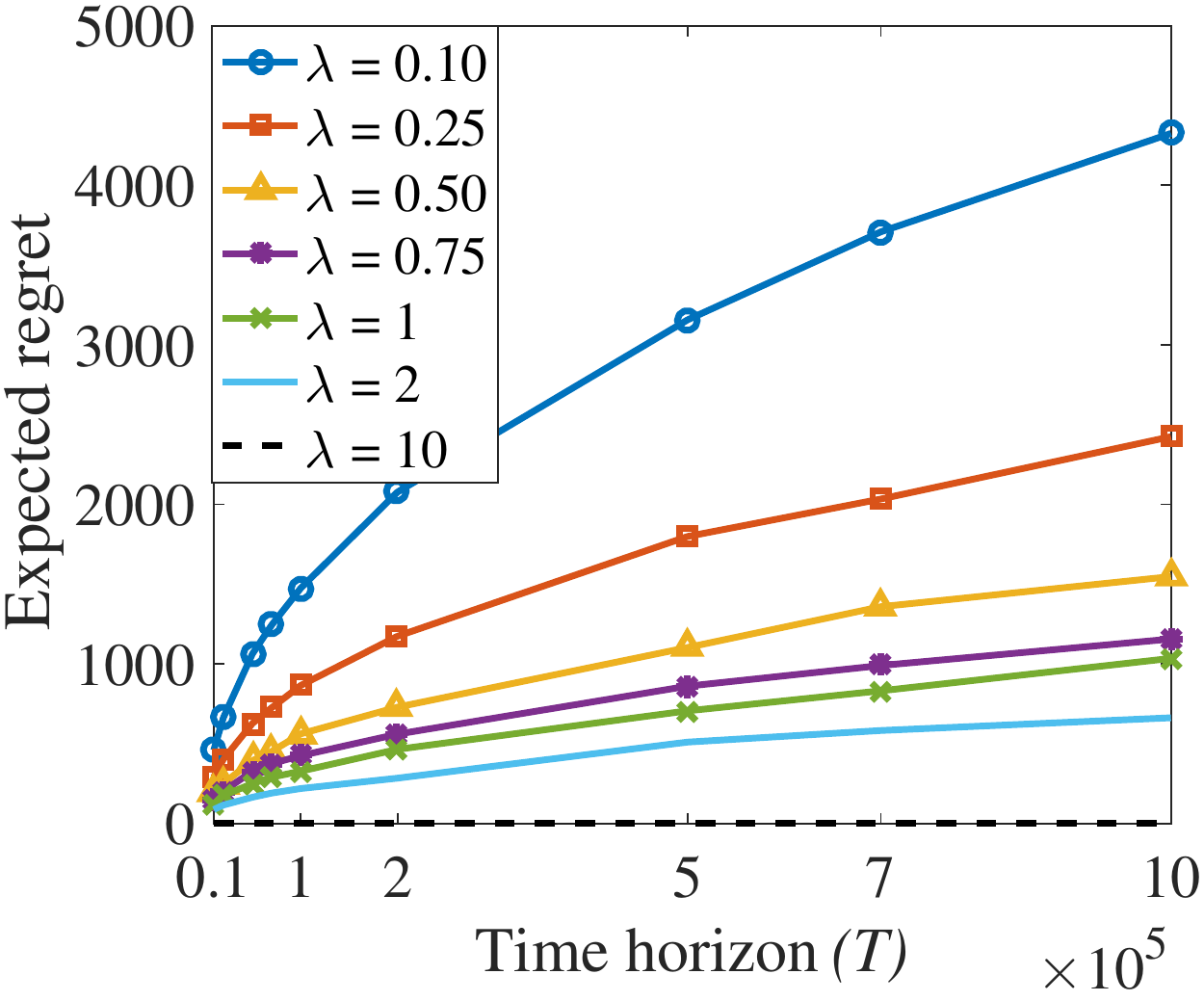}
	\caption{ Expected regret as a function of time horizon for various sub-exponential tails}
	\label{fig:BLplotOne}
	\end{figure}

\subsection{Simulation Results}

As mentioned at the end of our theoretical analysis,
for the sub-exponential tail case when $\lambda \rightarrow \infty$, the upper bound on the expected regret goes to 0. In our simulations with the maximum observed time horizon of $10^6$, the expected regret was observed to be uniformly zero, even for $\lambda=10$ (see Figure~\ref{fig:BLplotOne}).
Further, for other considered values of $\lambda$, the plots exhibit a prominent sub-linear nature.
In particular, considering the maximum observed time horizon of $10^6$,
the empirical exponents for different values of $\lambda$ were consistently observed to be between 0.45 and 0.5
(Theorem \ref{thm:sublinearTwo} showed the order of the regret with respect to $T$, for reasonable values of $\lambda$, to be bounded by $\sqrt{T \log (T)}$, which is an exponent close to 0.5).

\begin{figure}[t!]
	\centering
	\includegraphics[scale=0.6]{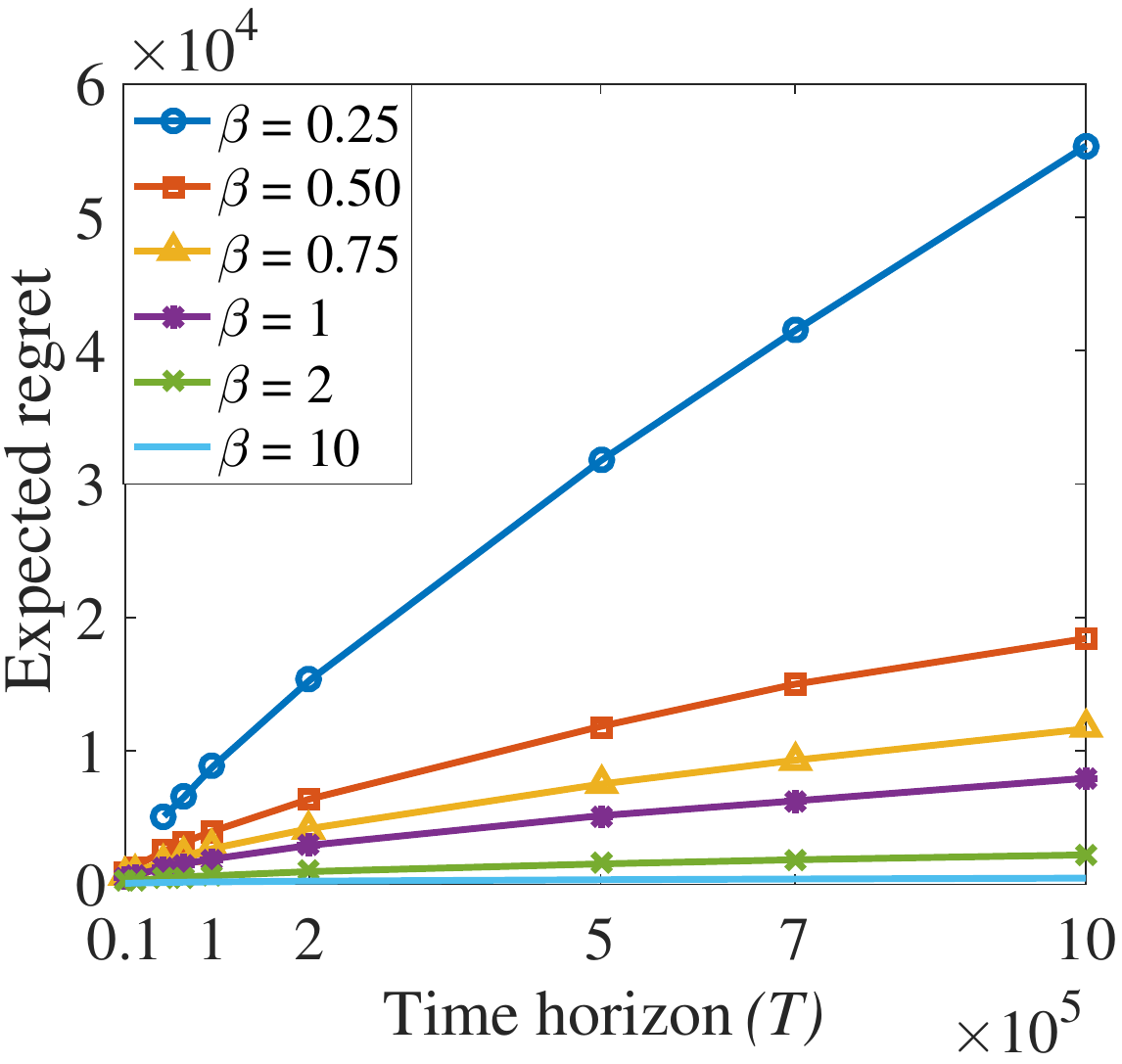}
	\caption{Expected regret as a function of time horizon for various \Subp tails}
	\label{fig:BLplotTwo}
\end{figure}

\begin{figure}[t!]
	\centering
	\includegraphics[scale=0.6]{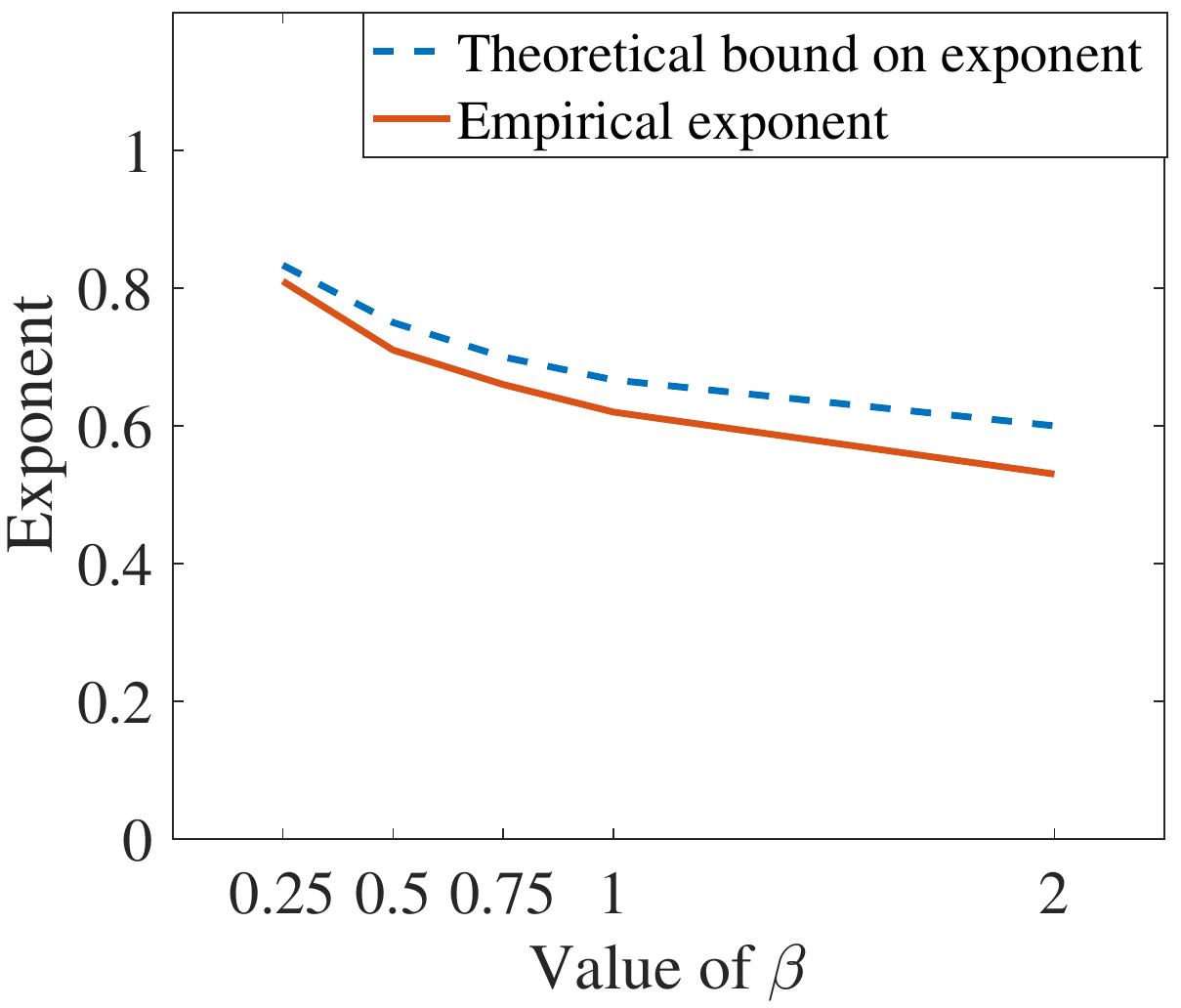}
	\caption{Empirical exponents vs. theoretical bounds for time horizons up to $10^6$}
	\label{fig:BLplotThree}
\end{figure}

For the \Subp tail case illustrated in Figure~\ref{fig:BLplotTwo}, note that we have
no result for $\beta=0.10$ because the value of $T$ for obtaining a feasible $\alpha$ should be greater than $36^6$, which is beyond our maximum observed time horizon of $10^6$.
Moreover, we have
partial results for $\beta=0.25$ because the value of $T$ for obtaining a feasible $\alpha$ should be greater than $36^3$; so the plot starts with $T = 0.5 \times 10^5$.
It can be seen, in general, that the plots in Figure~\ref{fig:BLplotTwo} follow a far less sub-linear nature and exhibit a much higher expected regret than those in Figure~\ref{fig:BLplotOne}.
This is intuitive from our analysis that the sub-exponential tail case is likely to result in a much lower regret than the \Subp tail case.
In particular, the empirical exponent for $\beta=0.25$ was deduced to be 0.8, which is  close to linear (its theoretical upper bound as per our analysis is 0.83).
In general,
considering the maximum observed time horizon of $10^6$,
it can be seen from Figure \ref{fig:BLplotThree} that
the upper bound on the theoretical exponent (which is $\frac{1+\beta}{1+2\beta}$ from Theorem \ref{thm:sublinearOne}) and the empirical exponent are close to each other.


Note that the gap between the empirical exponents and the corresponding theoretical upper bounds could be attributed to the fact that it is  difficult to find the worst-case distribution over the reward parameters of the arms.  Hence, it is unlikely that the    worst-case (or distribution-free) expected regret could be attained in the simulations with a random reward structure.
Since the gap is not very significant, the simulation results suggest that the bounds derived in our regret analysis of \Alg (in Section~\ref{sec:results}) are, in all probability, tight.

\subsection{Additional Notes on Simulation}

It is to be noted that our theoretical analysis holds for any arbitrary time horizon as long as the time horizon is known to \alg. In our simulations, we considered time horizons up to $10^6$ for computational reasons.
The regret is averaged over 1000 random  instances with  same arrival distribution of the best arm. In practice, as only one instance is realized,  the computational overhead is not an impediment in the real world applicability of the proposed algorithm. 

Note also that the standard MAB algorithms (e.g., the UCB family) which are oblivious to the structure on the arrival of arms, would incur linear regret. Also, since these algorithms explore each incoming arm at least once, they would incur linear regret even with sub-exponential or sub-Pareto assumption, when the number of arms grows linearly with time. Our simulations  aimed to observe the order of sublinearity of regret (exponent of $T$). Since existing algorithms would give linear regret, the exponent of $T$ is trivially 1.

\section{Extensions }
\label{sec:extensions}
In this section, we discuss possible extensions and relaxations of the BL-MAB setting studied in the paper. Thus far, we assumed that the true parameter of the arrival distribution of the best arm (i.e., $\lambda $ or $\beta$) is known a priori to the \Alg algorithm. We relax this assumption and consider that the parameter ($\lambda$ or $\beta$) is known only approximately correctly. We show that the proposed algorithm achieves  sublinear regret guarantees even with this relaxation.  Next, we relax the assumption that the best arm arrives early, and instead consider that a large fraction of arms arrive early. We show that our algorithm is applicable even with this alternative assumption;
we validate this assumption with real-world datasets.


 \subsection{Unknown Distributional Parameters of Best Arm's Arrival}
\label{ssec:unknownparam}

So far, we have assumed that the distributional parameters ($\beta$ and $\lambda$) are known to the algorithm designer. In most practical settings, these parameters are not known but can be learnt using  previous data. In this section, we show the effect on the regret bound if learned parameters are used instead of the true parameters.

\subsubsection{Sub-exponential tail distribution}

Let $(X_i)_{i =1}^{n}$ be a collection of i.i.d. random variables sampled from an exponential distribution with parameter $\lambda$ truncated at  $T$. Further, let $T$ be large enough such that $F_{X}(T) = 1 - e^{-\lambda T}  \approx 1$. Let $X = \frac{1}{n}\sum_{i=1}^n X_i$ denote the empirical average  over $n$ questions posted. Let $\hat{\lambda}$ be the estimated parameter of the $n$ i.i.d.  exponential random variables; then $X \approx  \frac{1}{\hat{\lambda}}$. Further, let there be two parameters $\mu$ and $\delta$ such that the Hoeffding's inequality~\cite{hoeffding1956} gives us the following:
\begin{equation}
\label{eq:prob1}
\mathbb{P}\left(\left|\frac{1}{\hat{\lambda}} - \frac{1}{\lambda}\right| \ge \mu \right) \le \delta
\end{equation}


Here, $\mu$ and $\delta$ denote how close the learned parameter and the true parameter are. The value of these parameters depend on the number of samples that we select. For example, in order to achieve confidence of $1-\delta$, we need $\delta \le 2 e^{-\frac{2 n \mu^2}{T^2}}
\implies n \ge \frac{T^2\ln{\left(\frac{2}{\delta}\right)}}{2 \mu^2}$, where $n$ is the number of samples used to learn  parameter $\hat{\lambda}$.

 In Algorithm \ref{alg:B-MOSS}, we will now choose $\alpha = \frac{W(2\hat{\lambda}T)}{2\hat{\lambda}T}$ instead of $\frac{W(2{\lambda}T)}{2{\lambda}T}$ (which is the regret optimal $\alpha$, from Theorem~\ref{thm:sublinearTwo}). Choosing this $\alpha$ would not change Lemma \ref{lem:regret} and we would still have $R_{\Alg}(T) \le T(1-(1-6\sqrt{\alpha})F_X(\alpha T))$. We now have the following theorem on regret.
\begin{theorem}
If \Alg (Algorithm \ref{alg:B-MOSS}) is run with a learned parameter $\hat{\lambda}$ that satisfies Inequality~(\ref{eq:prob1}), then with probability at least $1-\delta$, the regret under  sub-exponential tail distribution assumption is upper bounded by $O\left(T^{\frac{1+\mu\lambda}{2}}\left(\sqrt{\frac{W(2\hat{\lambda}T)}{2\hat{\lambda}}}\right)^{1-\mu\lambda}\right)$ if $\mu\lambda < 1$ and $O\Bigg( \sqrt{\frac{T \cdot W(2\hat{\lambda}T)}{2\hat{\lambda}}}\Bigg)$ otherwise.
The regret is sub-linear in both the cases.
\label{thm:uncertainty_subexp}
\end{theorem}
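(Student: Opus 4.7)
The plan is to start from Lemma~\ref{lem:regret} instantiated at the algorithm's learned choice $\alpha = \frac{W(2\hat{\lambda}T)}{2\hat{\lambda}T}$. Since Lemma~\ref{lem:regret} only relies on the definition of $\alpha$ and not on how it was derived, it immediately yields
$$ \mathcal{R}_{\Alg}(T) \;\le\; 6T\sqrt{\alpha} + T(1 - F_X(\alpha T)) \;\le\; 6T\sqrt{\alpha} + T e^{-\lambda \alpha T}, $$
where the second inequality applies the \emph{true} sub-exponential tail (with parameter $\lambda$, not $\hat{\lambda}$). The first term is already $O\!\left(\sqrt{TW(2\hat{\lambda}T)/\hat{\lambda}}\right)$, matching the Case~2 target, so all the work lies in controlling the tail term.

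The key identity is that, because $\alpha T = W(2\hat{\lambda}T)/(2\hat{\lambda})$ and Property~\ref{prop:zero} gives $e^{-W(2\hat{\lambda}T)} = W(2\hat{\lambda}T)/(2\hat{\lambda}T) = \alpha$, we can rewrite $e^{-\lambda \alpha T} = \alpha^{\lambda/(2\hat{\lambda})}$. This converts the exponential tail into a power of $\alpha$ whose exponent is controlled by $\lambda/\hat{\lambda}$, which is exactly what the learning guarantee~(\ref{eq:prob1}) bounds. Conditioning on the event of probability at least $1-\delta$ from~(\ref{eq:prob1}), we have $1/\hat{\lambda} \ge 1/\lambda - \mu$, and hence $\lambda/\hat{\lambda} \ge 1 - \mu\lambda$.

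In Case~1 ($\mu\lambda < 1$), this exponent is strictly positive; since $\alpha \in (0,1]$ we then obtain $\alpha^{\lambda/(2\hat{\lambda})} \le \alpha^{(1-\mu\lambda)/2}$, and a direct expansion yields
$$ T\,\alpha^{(1-\mu\lambda)/2} \;=\; T^{(1+\mu\lambda)/2}\left(\sqrt{\frac{W(2\hat{\lambda}T)}{2\hat{\lambda}}}\right)^{1-\mu\lambda}. $$
This dominates $6T\sqrt{\alpha}$ since its $T$-exponent exceeds $1/2$, and is sub-linear because $(1+\mu\lambda)/2 < 1$. As a sanity check, specialising to $\mu = 0$ recovers exactly the bound of Theorem~\ref{thm:sublinearTwo}, so the derivation interpolates correctly from perfect knowledge to noisy estimation.

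The main obstacle is Case~2 ($\mu\lambda \ge 1$), where the lower bound $\lambda/\hat{\lambda} \ge 1-\mu\lambda$ collapses to something non-positive and the Case~1 route breaks. Here the plan is to fall back on the internal-regret term: since the estimation error is too loose to refine the external regret, the overall bound degrades to $O\!\left(\sqrt{TW(2\hat{\lambda}T)/\hat{\lambda}}\right)$. The subtle step is justifying this fallback, because on the high-probability event $\hat{\lambda}$ may be arbitrarily large (only the one-sided bound $\hat{\lambda} \ge \lambda/(1+\mu\lambda)$ survives). The plan is to use this remaining one-sided control together with the monotonicity of $W$ and the identity $e^{-\lambda \alpha T} = \alpha^{\lambda/(2\hat{\lambda})}$ to show that $T e^{-\lambda \alpha T}$ does not exceed $6T\sqrt{\alpha}$ up to constants in this regime, from which sub-linearity in $T$ follows in both cases.
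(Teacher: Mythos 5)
Your route is the same as the paper's: instantiate Lemma~\ref{lem:regret} at $\alpha = \frac{W(2\hat{\lambda}T)}{2\hat{\lambda}T}$, convert the tail term via $e^{-\lambda\alpha T} = \alpha^{\lambda/(2\hat{\lambda})}$ (the paper reaches the same lower bound $F_X(\alpha T) > 1-\alpha^{\lambda/(2\hat{\lambda})}$ by a longer chain of implications through Properties~\ref{prop:zero} and~\ref{prop:two}, but the content is identical), invoke the one-sided consequence $\lambda/\hat{\lambda} \ge 1-\mu\lambda$ of Inequality~(\ref{eq:prob1}), and split on whether $\mu\lambda<1$. Your Case~1 is complete, correct, and matches the paper's, including the final algebra yielding $T^{(1+\mu\lambda)/2}\bigl(\sqrt{W(2\hat{\lambda}T)/(2\hat{\lambda})}\bigr)^{1-\mu\lambda}$ and the sanity check at $\mu=0$.

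Case~2 is a genuine gap: you only announce a plan to show that $Te^{-\lambda\alpha T}$ is dominated by $6T\sqrt{\alpha}$ using the surviving bound $\hat{\lambda}\ge\lambda/(1+\mu\lambda)$, and that step cannot be carried out. The inequality $\alpha^{\lambda/(2\hat{\lambda})}\le\sqrt{\alpha}$ requires $\lambda/(2\hat{\lambda})\ge 1/2$, i.e.\ $\hat{\lambda}\le\lambda$; but when $\mu\lambda\ge 1$ the event in Inequality~(\ref{eq:prob1}) places no upper bound on $\hat{\lambda}$, so $\lambda/(2\hat{\lambda})$ can be arbitrarily small, $\alpha^{\lambda/(2\hat{\lambda})}$ arbitrarily close to $1$, and the external term arbitrarily close to $T$, while $6T\sqrt{\alpha}$ shrinks as $\hat{\lambda}$ grows. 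You should know, however, that the paper's own proof asserts exactly this inequality with the justification ``$\because\ \alpha\le 1$,'' which is insufficient for the same reason; so you have correctly isolated the fragile step rather than missed an idea the paper actually supplies. Closing Case~2 honestly would require an extra hypothesis (e.g.\ $\hat{\lambda}\le\lambda$, or a two-sided multiplicative control on $\hat{\lambda}$) beyond what Inequality~(\ref{eq:prob1}) provides in that regime.
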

\begin{proof}
Recall that $\alpha$ is the fraction of arms explored by  \alg.
The following is true with   probability at least $1-\delta$.
\begin{align*}
\allowdisplaybreaks
    &\alpha \geq  \frac{W(\hat{\lambda} T/c)}{\hat{\lambda} T/c} = e^{-W(2\hat{\lambda} T)} \tag*{ ($\because \, c = 1/2$ is regret optimal (Theorem \ref{thm:sublinearTwo}))} \\
   \implies & \log \alpha \ge -W \left(2T\hat{\lambda} \right)\\
    \implies &  W \left(\log\left(\frac{1}{\alpha}\right) e^{\log (\frac{1}{\alpha})} \right) \le W(2T\hat{\lambda}) \tag*{(Property \ref{prop:zero} of Lambert W function)}\\
    \implies & \log\left(\frac{1}{\alpha}\right) \le 2\alpha T\hat{\lambda} \tag*{(Property \ref{prop:two} of Lambert W function)}\\
    \implies & \alpha T \ge \frac{\log(1/\alpha)}{2\hat{\lambda}}\\
    \implies & F_X(\alpha T) > 1- e^{-\frac{\lambda\log(1/\alpha)}{2\hat{\lambda}}} = 1- \alpha^{\frac{\lambda}{2\hat{\lambda}}} \tag*{(sub-exponential tail assumption)}
    \end{align*}
Thus, by Lemma  \ref{lem:regret}, we have
\begin{align*}
\allowdisplaybreaks
R_{\Alg}(T) &\le T\left(1-(1-6\sqrt{\alpha})F_X(\alpha T)\right) \\ & < T\left(1-(1-6\sqrt{\alpha})(1-\alpha^{\frac{\lambda}{2\hat{\lambda}}} )\right) \\
& = T\left(6\sqrt{\alpha} + \alpha^{\frac{\lambda}{2\hat{\lambda}}}  - 6 \alpha^{\frac{1}{2}+\frac{\lambda}{2\hat{\lambda}}} \right) \\ &< T\left(6\sqrt{\alpha} + \alpha^{\frac{\lambda}{2\hat{\lambda}}}  \right)
\end{align*}

Since $\alpha < 1$ and $\frac{1}{\hat{\lambda}} \ge \frac{1}{\lambda} - \mu$, we have
\[
  R_{\Alg}(T) =
  \begin{cases}
    O(T\sqrt{\alpha}), & \text{if } \mu \lambda \ge 1 \\
    O(T\alpha^{\frac{1-\mu\lambda}{2}}), & \text{if } \mu\lambda < 1 \\
  \end{cases}
\]
\textbf{Case 1 $(\mu\lambda < 1)$}:
\begin{align*}
    R_{\Alg}(T) &= O(T\alpha^{\frac{1-\mu\lambda}{2}})\\
    &=O\left(T\left(\frac{W(2\hat{\lambda}T)}{2\hat{\lambda}T}\right)^{\frac{1-\mu\lambda}{2}}\right)\\
    &=O\left(T^{\frac{1+\mu\lambda}{2}}\left(\sqrt{\frac{W(2\hat{\lambda}T)}{2\hat{\lambda}}}\right)^{1-\mu\lambda}\right)
\end{align*}
Since $\mu\lambda < 1$ in this case, the above expression is sub-linear in $T$. If we absorb the constant parameters ($\hat{\lambda}, \lambda,$ and $\mu$), we have $R_{\Alg}(T) = O\left(T^{\frac{1+\mu\lambda}{2}}\left(\sqrt{\log T}\right)^{1-\mu\lambda}\right) \le O\left(T^{\frac{1+\mu\lambda}{2}}\left(\sqrt{\log T}\right)\right)$ (since $\mu > 0$). Note also that if the learned mean parameter $\frac{1}{\hat{\lambda}} $ is close to the true mean parameter $\frac{1}{{\lambda}}$ (i.e., $\mu$ is  close to zero),
 we recover the original  regret guarantee which is $O(\sqrt{T\log(T)})$, with probability at least $1-\delta$.
\\


\noindent
\textbf{Case 2} ($\mu\lambda\geq 1 $):
Here, $\alpha^{\frac{\lambda}{2\hat{\lambda}}} \leq \sqrt{\alpha} \;(\because \alpha \leq 1)$, and hence,
\begin{align*}
\allowdisplaybreaks
R_{\Alg}(T) &\leq O(T\sqrt{\alpha} )
\\
&=
O\Bigg(T \sqrt{\frac{W(2\hat{\lambda}T)}{2\hat{\lambda}T}}\Bigg)
\\
&=
O\Bigg( \sqrt{\frac{T \cdot W(2\hat{\lambda}T)}{2\hat{\lambda}}}\Bigg)
\end{align*}

If we absorb the constant parameter $\hat{\lambda}$ in order notation, we have
    $R_{\Alg}(T) = O(\sqrt{T \log(T)}$, that is, we recover the original  regret guarantee with probability at least $1-\delta$.







\end{proof}

\subsubsection{Sub-Pareto tail distribution}
Let $\hat{\beta}$ be the learned parameter of the sub-Pareto tail distribution.  Like in the sub-exponential case, let there be two parameters $\mu$ and $\delta$ such that
\begin{equation}
\label{eq:prob2}
\mathbb{P}\left(\left|\frac{\hat{\beta}}{\hat{\beta}-1} - \frac{\beta}{\beta-1}\right| \ge \mu \right) \le \delta
\end{equation}
Note that for \Subp tail distribution, mean is defined only for $\beta>1$, and thus we assume that $\beta > 1$ in the rest of the analysis. We can further derive the number of samples required as in the sub-exponential case, which will turn out to be the same for the given values of $\mu$ and $\delta$.
In Algorithm \ref{alg:B-MOSS}, we will now choose $\alpha = T^{\frac{ - 2\hat{\beta}}{1+2\hat{\beta}}}$ instead of $T^{\frac{ - 2{\beta}}{1+2{\beta}}}$ (which is the regret optimal $\alpha$, from Theorem~\ref{thm:sublinearOne}). We now have the following theorem on regret.


\begin{theorem}
Let $\hat{\beta}$ be a learned distributional parameter such that it satisfies Inequality~(\ref{eq:prob2}). If \Alg (Algorithm \ref{alg:B-MOSS}) is run with $\hat{\beta}$, then with probability at least $1-\delta$, the regret under  \Subp tail distribution assumption is upper bounded by $ O\left(T^{1-\frac{\beta(1-\mu\beta+\mu)}{ 1+2\beta-3\mu\beta+3\mu} }\right)$ if $\hat{\beta} > \beta$ and by $ O\left(T^{\frac{1+\beta + 2\mu(\beta -1)}{1+2\beta+3\mu(\beta-1)}} \right)$ if $\hat{\beta} \le \beta$. In both the cases, the regret is sub-linear.
\label{thm:uncertainty_subpareto}
\end{theorem}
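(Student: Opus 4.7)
The plan is to mirror the structure of the proof of Theorem~\ref{thm:uncertainty_subexp}. I would start from Lemma~\ref{lem:regret}, which remains valid regardless of how $\alpha$ is chosen, together with the sub-Pareto tail $F_X(\alpha T)>1-(\alpha T)^{-\beta}$, to get
\begin{equation*}
\mathcal{R}_{\Alg}(T)\le T\bigl(1-(1-6\sqrt{\alpha})F_X(\alpha T)\bigr)<T\bigl(6\sqrt{\alpha}+(\alpha T)^{-\beta}\bigr),
\end{equation*}
after discarding the non-positive cross term. Plugging in $\alpha=T^{-2\hat{\beta}/(1+2\hat{\beta})}$ selected by the algorithm, the two contributions simplify to
\begin{equation*}
T\sqrt{\alpha}=T^{(1+\hat{\beta})/(1+2\hat{\beta})},\qquad T(\alpha T)^{-\beta}=T^{(1+2\hat{\beta}-\beta)/(1+2\hat{\beta})}.
\end{equation*}
Comparing the two exponents shows that the first dominates precisely when $\hat{\beta}\le\beta$ and the second when $\hat{\beta}>\beta$, so the (intermediate) regret bound is either $O\!\bigl(T^{(1+\hat{\beta})/(1+2\hat{\beta})}\bigr)$ or $O\!\bigl(T^{(1+2\hat{\beta}-\beta)/(1+2\hat{\beta})}\bigr)$ accordingly.

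Next, I would convert Inequality~(\ref{eq:prob2}) into an explicit bound on $\hat{\beta}$. Writing $\tfrac{x}{x-1}=1+\tfrac{1}{x-1}$, the event of probability at least $1-\delta$ reads $\bigl|\tfrac{1}{\hat{\beta}-1}-\tfrac{1}{\beta-1}\bigr|\le\mu$, which rearranges to
\begin{equation*}
\hat{\beta}\le\frac{\beta-\mu\beta+\mu}{1-\mu\beta+\mu}\ \text{when }\hat{\beta}>\beta,\qquad \hat{\beta}\ge\frac{\beta+\mu(\beta-1)}{1+\mu(\beta-1)}\ \text{when }\hat{\beta}\le\beta.
\end{equation*}
Because $x\mapsto(1+2x-\beta)/(1+2x)=1-\beta/(1+2x)$ is strictly increasing and $x\mapsto(1+x)/(1+2x)=\tfrac12+\tfrac{1}{2(1+2x)}$ is strictly decreasing, the worst case for the exponent in each regime is obtained by substituting the extremal value of $\hat{\beta}$ from the displayed inequality above.

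The remaining work is algebraic. Substituting $\hat{\beta}=(\beta-\mu\beta+\mu)/(1-\mu\beta+\mu)$ into $(1+2\hat{\beta}-\beta)/(1+2\hat{\beta})$ and clearing denominators yields $1-\tfrac{\beta(1-\mu\beta+\mu)}{1+2\beta-3\mu\beta+3\mu}$, whereas substituting $\hat{\beta}=(\beta+\mu(\beta-1))/(1+\mu(\beta-1))$ into $(1+\hat{\beta})/(1+2\hat{\beta})$ yields $\tfrac{1+\beta+2\mu(\beta-1)}{1+2\beta+3\mu(\beta-1)}$, matching the two exponents claimed in the theorem. Sub-linearity then follows by inspection: in the $\hat{\beta}\le\beta$ case the denominator of the exponent strictly exceeds the numerator whenever $\beta>1$, while in the $\hat{\beta}>\beta$ case it reduces to $\beta(1-\mu\beta+\mu)>0$, which holds so long as $\mu$ is small enough that $\hat{\beta}$ remains positive; and as a sanity check, $\mu\to 0$ recovers the clean bound $\tfrac{1+\beta}{1+2\beta}$ of Theorem~\ref{thm:sublinearOne}. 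The main obstacle is purely bookkeeping: getting the monotonicity direction right in each regime so that the extremal $\hat{\beta}$ really does upper-bound the regret, and then carrying out the manipulation of nested fractions without slipping, especially since the two cases produce expressions of markedly different shape.
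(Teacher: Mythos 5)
Your proposal is correct and follows essentially the same route as the paper: bound the regret via Lemma~\ref{lem:regret} and the sub-Pareto tail, substitute $\alpha = T^{-2\hat{\beta}/(1+2\hat{\beta})}$, observe that the dominant term switches exactly at $\hat{\beta}=\beta$, and then plug in the extremal value of $\hat{\beta}$ permitted by Inequality~(\ref{eq:prob2}) using the monotonicity of the exponent in $\hat{\beta}$. The only (cosmetic) difference is that you keep the tail term as $(\alpha T)^{-\beta}$ rather than first rewriting it as $\alpha^{\beta/(2\hat{\beta})}$ as the paper does, and you should make explicit that the constraint $\mu<\tfrac{1}{\beta-1}$ (needed for positivity in the $\hat{\beta}>\beta$ case) is already forced when you invert $\tfrac{1}{\hat{\beta}-1}\ge\tfrac{1}{\beta-1}-\mu$ into an upper bound on $\hat{\beta}$.
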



\begin{proof}
For $c = 1/2$
(regret optimal value in Theorem \ref{thm:sublinearOne}), the following is true with   probability at least $1-\delta$.

\begin{align*}
& \alpha \ge T^{\frac{-\hat{\beta}}{\hat{\beta}+1/2}}
\\
\implies &   \alpha^{1-\frac{\hat{\beta}}{\hat{\beta}+1/2}} \ge  (\alpha T)^{\frac{-\hat{\beta}}{\hat{\beta}+1/2}}
\\
\implies &   \alpha^{\frac{1/2}{\hat{\beta}+1/2}} \ge  (\alpha T)^{\frac{-\hat{\beta}}{\hat{\beta}+1/2}}
\\
\implies &   \alpha^{\frac{\beta}{2\hat{\beta}}} \ge  (\alpha T)^{-\beta}
\\
    \implies & F_X(\alpha T) > 1-(\alpha T)^{-\beta} \ge 1-\alpha^{\frac{\beta}{2\hat{\beta}}}
    \tag*{(sub-Pareto tail assumption)}
\end{align*}
Thus, by Lemma~\ref{lem:regret}, we have $R_{\Alg}(T) \le T(1-(1-6\sqrt{\alpha})F_X(\alpha T)) < T\left(1-(1-6\sqrt{\alpha})(1-\alpha^{\frac{\beta}{2\hat{\beta}}})\right)$. Hence,
\begin{align*}
    R_{\Alg} & < T\Big(6\sqrt{\alpha} + \alpha^{\frac{\beta}{2\hat{\beta}}} - 6\alpha^{\frac{\beta}{2\hat{\beta}} + \frac{1}{2}}\Big)
    < T\Big(6\sqrt{\alpha} + \alpha^{\frac{\beta}{2\hat{\beta}}}\Big)
\end{align*}

\noindent

\noindent\textbf{Case 1} ($\hat{\beta}> \beta $):
Note that, $\frac{\beta}{\beta-1}-\mu \le \frac{\hat{\beta}}{\hat{\beta}-1}$  is equivalent to
\begin{equation}
    \hat{\beta} \le  \frac{\beta - \mu\beta +\mu}{1-\mu\beta+\mu}
    \label{eqn:betahat_upper_bound}
\end{equation}
We further have $\frac{\hat{\beta}}{\hat{\beta} - 1} > 1$ and hence, the lower bound estimate on the mean should also be greater than $1$. The lower bound estimate of $\frac{\hat{\beta}}{\hat{\beta}-1}$ with probability $1-\delta$ is: $ \frac{\beta}{\beta-1} - \mu$. Thus, $\frac{\beta}{\beta-1} - \mu> 1$, which  gives us that for the case $\hat{\beta}>\beta$:
\begin{equation}
    \mu < \frac{1}{\beta-1}
    \label{eqn:mu_bound_pareto}
\end{equation}
For this case,
$\alpha^{\frac{\beta}{2\hat{\beta}}} > \sqrt{\alpha} \;(\because \alpha \leq 1)$, and hence,
\begin{align*}
    R_{\Alg} & \leq O\Big(T \alpha^{\frac{\beta}{2\hat{\beta}}}\Big)
    \\
     & = O\Bigg(T^{1-\frac{\beta}{2\hat{\beta}} \big( \frac{2\hat{\beta}}{2\hat{\beta} +1} \big)}\Bigg) \tag*{($\because  \alpha = T^{\frac{-2\hat{\beta}}{2\hat{\beta} +1}}$ minimizes regret (Theorem \ref{thm:sublinearOne}))}
    \\
    & = O\Bigg(T^{1-\frac{\beta}{2\hat{\beta} +1} }\Bigg)
    \\
    & \leq O\left(T^{1-\frac{\beta}{2\left( \frac{\beta - \mu\beta +\mu}{1-\mu\beta+\mu} \right) +1} }\right)
     \tag*{($\because$ the upper bound on $\hat{\beta}$ is $\frac{\beta - \mu\beta +\mu}{1-\mu\beta+\mu}$  w.p. at least $1-\delta$ (Inequality~(\ref{eqn:betahat_upper_bound})))}
     \\
     & = O\left(T^{1-\frac{\beta(1-\mu\beta+\mu)}{ 1+2\beta-3\mu\beta+3\mu} }\right)
\end{align*}
Note that the regret is sub-linear in $T$ since $\frac{\beta(1-\mu\beta+\mu)}{ 1+2\beta-3\mu\beta+3\mu } > 0$ ($\because  \mu < \frac{1}{\beta-1}$ for this case, from Inequality~(\ref{eqn:mu_bound_pareto})).
\\

\noindent
\textbf{Case 2} ($\hat{\beta}\leq \beta $):
We have $\frac{\hat{\beta}}{\hat{\beta}-1} \leq \frac{{\beta}}{{\beta}-1} + \mu$ with probability at least $1-\delta$, which is equivalent to
\begin{equation}
\hat{\beta} \geq \frac{\beta + \mu\beta -\mu}{1+\mu\beta-\mu}
    \label{eqn:betahat_lower_bound}
\end{equation}

For this case,
$\alpha^{\frac{\beta}{2\hat{\beta}}} \leq \sqrt{\alpha} \;(\because \alpha \leq 1)$, and hence,

\begin{align*}
    R_{\Alg} & \leq O\Big(T\sqrt{\alpha}\Big)
    \\
     & = O\Bigg(T^{1-  \frac{1}{2 +\frac{1}{\hat{\beta}}} }\Bigg) \tag*{($\because  \alpha = T^{\frac{-2\hat{\beta}}{2\hat{\beta} +1}}  = T^{\frac{-2}{2 +\frac{1}{\hat{\beta}}}}$ (Theorem \ref{thm:sublinearOne}))}
     \\
      & \leq O\Bigg(T^{1-  \frac{1}{2 +\frac{1+\mu\beta-\mu}{\beta + \mu\beta -\mu}} }\Bigg)
      \tag*{($\because$ the lower bound on $\hat{\beta} $ is $\frac{\beta + \mu\beta -\mu}{1+\mu\beta-\mu} $ w.p. at least $1-\delta$ (Inequality~(\ref{eqn:betahat_lower_bound})))}
      \\
      & = O\left(T^{1- \frac{\beta + \mu\beta -\mu}{1+2\beta+3\mu\beta-3\mu}} \right)
      \\
      & = O\left(T^{\frac{1+\beta + 2\mu(\beta -1)}{1+2\beta+3\mu(\beta-1)}} \right)
\end{align*}

Note that the regret is sub-linear in $T$.
\\

Note also that in both the cases, when $\mu$ tends to zero, the regret tends to go to the original regret of $ O(T^{\frac{1+\beta}{1+2\beta}})$ (Theorem \ref{thm:sublinearOne}), with probability at least $1-\delta$.






\end{proof}

~\\

\noindent\textbf{Cost of Parametric Uncertainty (CPU):}
We now quantify the robustness of the regret guarantee provided by BL-Moss towards uncertainty in the tail distribution of the best arm's arrival.
We define CPU to be the ratio of the regret achieved with the learned parameters and the regret achieved with the actual parameters. From the above theorems, we have the following:

\begin{enumerate}[leftmargin=*]
    \item CPU for sub-exponential tail distribution is given as
    (on absorbing the constant parameters $\hat{\lambda}$, $\lambda$, and $\mu$ in order notation):

    \begin{align*}
    \allowdisplaybreaks
        CPU(\lambda, \mu)
        &=\begin{cases}   \frac{T^{\frac{1+\mu\lambda }{2}} \Big(\sqrt{\log(T)}\Big)^{1-\mu\lambda}}{\sqrt{T\log(T)}}
        = \left( \sqrt{\frac{T}{\log(T)}} \right)^{\mu\lambda} & \text{if } \mu\lambda < 1 \\[1em] \frac{\sqrt{T\log(T)}}{\sqrt{T\log(T)}} = 1 & \text{if } \mu\lambda \ge 1 \end{cases}
    \end{align*}

    \item CPU for \Subp tail distribution is given as:
    \begin{align*}
        CPU(\beta,\mu) &=\begin{cases}
         \frac{T^{1-\frac{\beta(1-\mu\beta+\mu)}{ 1+2\beta-3\mu\beta+3\mu } }}{T^{\frac{1+\beta}{ 1+2\beta} }} = T^{\frac{2\mu\beta(\beta-1)}{(1+2\beta)(1+2\beta-3\mu(\beta-1))}} & \text{if } \hat{\beta} > \beta \\[1em]
          \frac{T^{\frac{1+\beta + 2\mu(\beta -1)}{1+2\beta+3\mu(\beta-1)}}}{T^{\frac{1+\beta}{ 1+2\beta} }} = T^{\frac{\mu(\beta-1)^2}{(1+2\beta)(1+2\beta+3\mu(\beta-1))}} & \text{if } \hat{\beta} \leq \beta
        \end{cases}
    \end{align*}

\end{enumerate}

\subsection{Arm Arrival Distribution}
\label{ssec:arrivalassumptions}

Throughout the paper, we considered a setting that assumed certain distributions on the arrival of the best arm, namely, the best arm is more likely to arrive in early rounds.   In this section, we  discuss another practically relevant setting, which assumes distribution on the arrival rate of arms with time. If arms arrive at a faster rate in early rounds, that is, if a large fraction of arms arrive relatively early,  it can be shown that one can use our proposed \Alg algorithm.  The following result establishes the equivalence between the distributional assumptions in the two settings.
\begin{theorem}
Let $f(t)$ denote the fraction of arms arrived till time $t$. Further, let the quality of each arriving  arm be an i.i.d. sample  from \textsf{unif[0, 1]}. Then, the best arm's arrival distribution $F_X(t)$ satisfies  $F_X(t)  = f(t)$.
\label{thm:equivalence}
\end{theorem}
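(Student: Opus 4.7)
The plan is to exploit the symmetry inherent in the i.i.d.\ assumption on arm qualities. Let $N = |K(T)|$ be the total number of arms that arrive by the time horizon, so that by definition of $f$, the number of arms available at time $t$ is $|K(t)| = f(t) \cdot N$. My goal is to compute $F_X(t) = \mathbb{P}(X \le t) = \mathbb{P}(i^\star \in K(t))$, i.e., the probability that the arm with maximum quality is among those that have already arrived by time $t$.

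First, I would observe that the arrival schedule $(K(t))_{t=1}^T$ is fixed (or at least determined independently of the realized qualities), and that the qualities $(q_i)_{i \in K(T)}$ are i.i.d.\ samples from $\textsf{unif}[0,1]$. Since the uniform distribution is continuous, ties occur with probability zero, so there is almost surely a unique best arm $i^\star = \arg\max_{i \in K(T)} q_i$. By exchangeability of i.i.d.\ random variables, the position of the maximum is uniformly distributed over the $N$ arms:
\begin{equation*}
\mathbb{P}(i = i^\star) = \frac{1}{N} \quad \text{for every } i \in K(T).
\end{equation*}

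Next, I would write
\begin{equation*}
F_X(t) = \mathbb{P}(i^\star \in K(t)) = \sum_{i \in K(t)} \mathbb{P}(i = i^\star) = \frac{|K(t)|}{N} = \frac{f(t)\cdot N}{N} = f(t),
\end{equation*}
which gives the desired identity. If the arrival schedule is itself random, I would condition on $(K(t))_{t=1}^T$ and note that the above argument applies pointwise for every realization, so the identity holds after taking expectations.

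The only subtlety is to make sure the symmetry argument is valid, which hinges on the fact that the quality of an arm is drawn independently of its arrival time. Once this independence is stated explicitly, the rest is a one-line counting argument, so I do not anticipate any significant obstacle in carrying out the proof. If desired, one could add a remark explaining that the assumption of a common uniform distribution can be relaxed to any common continuous distribution, since only exchangeability (and not the specific distribution) is used.
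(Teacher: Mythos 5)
Your proposal is correct and takes essentially the same approach as the paper: both rest on the observation that, with i.i.d.\ qualities, each of the $N$ (the paper's $M$) arms is equally likely to be the best, so $F_X(t)$ equals the fraction of arms that have arrived by time $t$. The only cosmetic difference is that the paper telescopes the sum over per-round arrival increments $M\bigl(f(\ell)-f(\ell-1)\bigr)$ while you sum directly over $K(t)$; your explicit justification of the $1/N$ symmetry via exchangeability and almost-sure uniqueness of the maximum is a welcome addition that the paper leaves implicit.
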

\begin{proof}
Let $M \geq1$ be the total number of arms arrived till time $T$. First, observe that $\mathbbm{P}(i = i^{\star}) = \frac{1}{M}$. Here, $i^{\star} = \arg\max_{i\in [M]} q_i$. We have
\begin{align*}
F_X(t) &= \sum_{\ell=1}^{t} M \cdot (f(\ell) - f(\ell -1)) \mathbbm{P}(i_{\ell} = i^{\star}) \\ & = M\cdot (f(t) - f(0)) \frac{1}{M} = f(t) \tag*{($\because f(0) = 0$)}
\end{align*}
\end{proof}
A fundamental difference between the two settings is that, in the first setting, we consider that  a new arm  arrives at each time instant; whereas in the second setting, we consider that arms follow an arrival process having a sharp tail. In the first setting, our proposed algorithm achieves sublinear regret due to early arrival of the best arm.  In the second setting, even if  each arm is equally likely to be the best arm, our algorithm achieves sublinear regret owing to most of the arms arriving relatively early.


\begin{figure}[t!]
\centering
	\includegraphics[width= \linewidth]{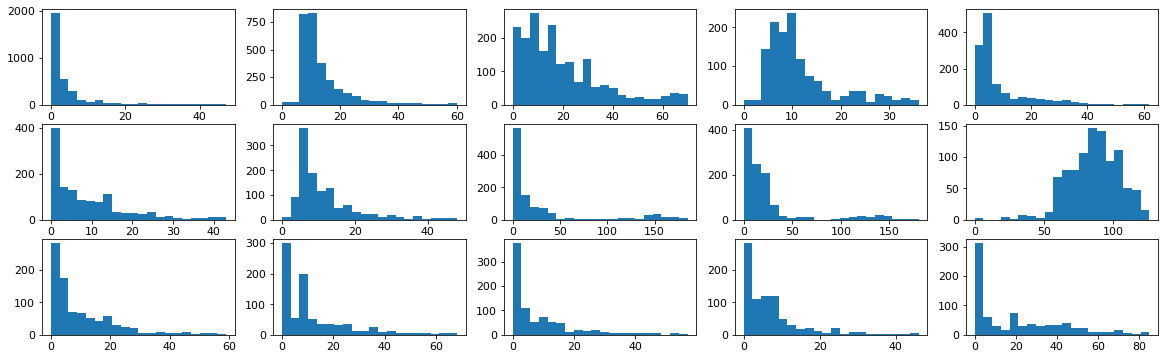}
	\caption{Arrival distribution of reviews for representative Digital Music products on Amazon (X-axis: number of months elapsed since the first review, Y-axis: number of reviews)}
	\label{fig:digitalmusic}
\end{figure}

We now validate our  distributional assumption on the arrival of arms, with real-world data such as posting times of answers for questions on StackExchange\footnote{StackExchange data dump is available publicly at \cite{SE20}.} and posting times of reviews for products on Amazon\footnote{Amazon review data is available publicly at \cite{NI18}.} and Steam\footnote{Steam video game and bundle data is available publicly at \cite{ST17}.}.  Figure \ref{fig:digitalmusic} presents the arrival distribution of reviews over time for a representative set of the most popular digital music products on Amazon.   Each subfigure shows the arrival of reviews for a particular product. In each subfigure, the X-axis represents the number of months elapsed since the first posted review for that product, and the Y-axis represents the number of reviews.
It is to be noted that in  MAB  applications, X-axis usually represents the number of opportunities to pull the arms (here, the cumulative number of views to the reviews on a given product page) and not the wall-clock time (here, the number of months). We assume that the number of such views does not change significantly across different time intervals, and hence consider the wall-clock time as a proxy for the number of opportunities for arm pulls.

\begin{figure}[t!]
\centering
	\includegraphics[width= \linewidth]{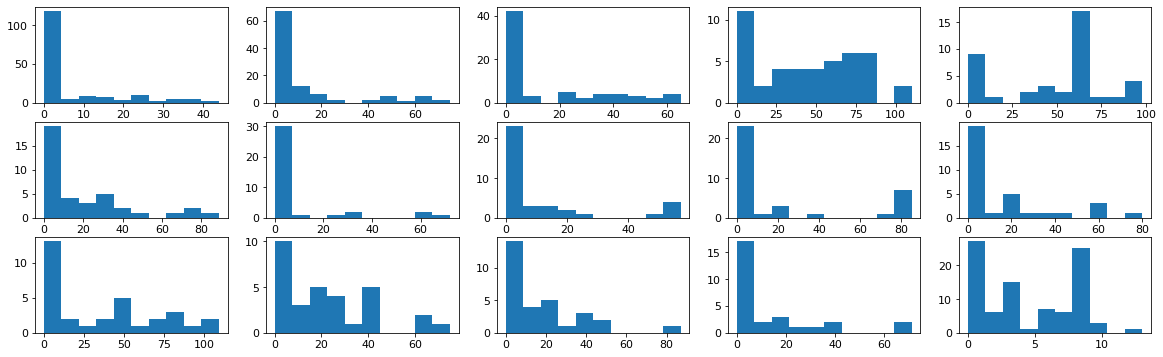}
	\caption{Arrival distribution of answers for representative questions on Mathematics StackExchange (X-axis: number of months since the first posted answer, Y-axis: number of answers)}
	\label{fig:mathexchange}
\end{figure}

We observe that for most products, the number of reviews  follows a decreasing trend over time (i.e., a large fraction of the reviews arrive early), which is aptly captured by sub-exponential and \Subp tail distributions.
A very similar trend was observed for questions on various sub-domains of StackExchange, reviews on Amazon products belonging to other categories like CDs \& vinyls, video games, software, movies and TV, etc., as well as reviews on video games on Steam.
Figure \ref{fig:mathexchange} presents the arrival distribution of answers over time for a representative set of the most answered questions on the Mathematics StackExchange platform.
Likewise, the arrival distribution of reviews for representative products belonging to categories of CDs \& Vinyls and video games are respectively presented in Figures \ref{fig:CDsVinyl} and \ref{fig:videogames}  in  Appendix \ref{app:simulation}.


It is relevant to note here that this trend is not necessarily followed across all product categories.
For instance, certain products would follow upticks or a gradual increase in the number of posted reviews, either due to marketing campaigns (through media advertising or word-of-mouth publicity) or spike in demand during festive seasons.
Amazon gift cards, cell phones and accessories, etc. are popular examples of such products (Figure \ref{fig:giftCards} in Appendix \ref{app:simulation} presents the distribution for Amazon gift cards).

~\\
{\em A note to practitioners\/}:
 In practice, the arrival distribution of reviews for  any product would depend on the type of the product, marketing strategy of the product manufacturer,  true quality of the product, and so on. Though the proposed \Bmab\ framework provides curation strategy to identify the best arm (user generated content)  from ever increasing choices, one must use expert knowledge about arrival rate of the arms, qualities of arriving arms, total expected number of arms, and so on, so as to design an optimal learning \Bmab\ algorithm. We leave mathematical modeling and design of specialized \Bmab\ algorithms which take into account the specific arrival of arms, as an interesting future direction to our work.

 \subsection*{A Remark on Unknown Distributional Parameters of Arm Arrival Distribution}


 Note that if we have uncertainty with respect to the distributional parameters (discussed in Section~\ref{ssec:unknownparam})
 in the setting which makes distributional assumption on the rate of arrival of arms (discussed in Section~\ref{ssec:arrivalassumptions}), we could transform it into the setting which makes distributional assumption on the arrival time of the best arm using Theorem~\ref{thm:equivalence}, and then show that the sub-linearity of regret is preserved even if we have uncertainty with respect to the distributional parameters (using Theorem~\ref{thm:uncertainty_subexp} or \ref{thm:uncertainty_subpareto}). Thus, though the distributional parameters signify very different things in the two settings, one can prove that our algorithm would achieve sub-linear regret in such a combined case.

\section{Additional Related Work}
\label{sec:BL-related}
A standard stochastic MAB framework considers that the number of available arms is fixed (say $k$) and typically much less than the time horizon (say $T$). In the seminal work of Lai and Robbins~\cite{lai85}, the authors showed that any MAB algorithm in such a setting must incur a regret of $\Omega(\frac{\log T}{D_{\text{KL}}})$ where $D_{\text{KL}}$ is the Kullback-Leibler
divergence between the best arm and the second best arm. Auer~\cite{auer2002using} proposed the UCB1 algorithm which attains a matching upper bound on the expected regret. However, the distribution-free (i.e., in adversarial case) regret of the variant of UCB1, $(\alpha, \psi)$-UCB,  is given by  $ O( \sqrt{kT\log T})$~\cite{bubeck2012regret}. The \Moss algorithm proposed by Audibert and Bubeck~\cite{audibert2010regret} achieves the distribution-free regret of $O(\sqrt{kT})$. 
Bubeck and Cesa-Bianchi~\cite{bubeck2012regret} present
 a detailed survey on regret bounds of these algorithms.
 
 A setting similar to ballooning bandits is studied under Markovian bandits framework; where each arm is characterized  by a known MDP. This setting, known as  \emph{arm-acquiring bandits}~\cite{whittle1981arm}  was first studied by~\cite{nash1973optimal}. In arm acquiring bandits framework the goal is to maximize the discounted, infinite time cumulative reward whereas in ballooning bandits goal is to minimize the finite time cumulative regret. The difference in the two models is further highlighted by the fact that ballooning bandits is a \emph{learning} problem whereas arm-acquiring bandits is a planning problem.

The problem of learning qualities of the answers on Q\&A forums was first modeled under MAB framework by Ghosh and Hummel~\cite{ghosh2013learning} where generation of a new arm was considered as a consequence of strategic choice of an agent. Though this model captures strategic aspects of the contributors, there is an important practical issue with such modelling. Each agent, being a strategic attention seeker, is assumed to produce the effort just enough to satisfy incentive compatibility in the equilibrium. We do not assume an efforts-and-costs model and show that, even when the number of answers grows linearly with time if the  qualities of arriving answers follow  certain mild distributional assumption,
the proposed algorithm achieves sub-linear regret. 
%

Tang and Ho~\cite{tang2019bandit} consider a model with fixed number of arms but with a platform where agents provide biased feedback. On such Q\&A forums, it is more relevant to consider the problem with increasing number of arms.
A recent work by Liu and Ho~\cite{LIU18} limits the growth of the bandit arms by randomly dropping some of the arms from consideration, and computing the regret with respect to only the considered arms. That is, they do not account for the regret incurred due to the randomly dropped arms.  



\section{Discussion and Future Work}
\label{sec:BL-discussion}

In this paper, we presented a novel extension to the classical MAB model, which we call the Ballooning bandits model (\bmab). We showed that,  it is impossible to attain a  sub-linear regret guarantee without any distributional assumption on the best arm's
arrival.
We proposed an algorithm for the \bmab\ model and provided sufficient conditions under which the proposed algorithm achieves sub-linear regret.  In particular, when the arrival distribution of the best quality arm has a sub-exponential or \Subp tail, our algorithm \Alg achieves sub-linear regret by restricting the number of arms to be explored in an intelligent way.  

Our results indicate that the number of arms to be explored  depends on the distributional parameters, namely, $\lambda$ (for sub-exponential case) and $\beta$ (for \Subp case), which must be known to the algorithm. However, in practice, these parameters may not be known exactly. We studied the increase in regret when one must use approximations of these values.  It will be interesting to see how a learning algorithm can be designed to learn these parameters. 
We also studied the effect of a varying rate of arrival of arms (instead of the arrival time of the best arm). Owing to our equivalence theorem, our algorithm and results are directly applicable to cases wherein arm arrivals follow a sub-exponential or \Subp structure. However, a general result with arbitrary (albeit sublinear) arrival of arms is still an open question.   One could also consider other  arrival processes for arms, in order to obtain tighter, arrival specific regret guarantees.

In this work, we employed \textsc{Moss} as the underlying learning algorithm owing to its simplicity and optimality,  in terms of both the number of arms and the time horizon. It is an interesting future direction to determine the threshold parameter $\alpha$ under other learning algorithms such as \textsc{Thompson Sampling}, \textsc{UCB1}, \textsc{KL-UCB}, and analyze the corresponding regret bounds.
We assumed the knowledge of time horizon, as is the case with several works on MAB. Note that even if the time horizon is not known, one could always work with its approximate value which is typically known from  past experiences. Extending our algorithm to the case of unknown time horizon using techniques such as  MOSS-anytime \cite{degenne2016anytime} or doubling trick \cite{besson2018doubling}, is a promising direction for future work.




\section*{Acknowledgement}
GG gratefully acknowledges   the support from MHRD fellowship, Govt. of India and  the Israeli Ministry of Science and Technology Grant 19400214.

\bibliographystyle{alpha}
\bibliography{mybibfile}
\appendix
\newpage
\appendix
\noindent {\huge \textbf{Appendices} }
\section{Omitted Proofs}
\label{sec:BL-appendix}
\begin{claim}
$\frac{W(\lambda T /c)}{\lambda T/c} < 1/36 \Longleftrightarrow T > \frac{36 c \log(36)}{\lambda } $
\label{clm:Tbound}
\end{claim}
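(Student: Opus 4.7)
The plan is to reduce the stated equivalence to a monotonicity statement about the Lambert $W$ function by substituting $x := \lambda T / c$ and rewriting the condition $W(x)/x < 1/36$ in a form that isolates $W(x)$. Observe that for $x > 0$ we have $W(x) > 0$, so by the defining identity $W(x)\,e^{W(x)} = x$ we can rewrite
\[
\frac{W(x)}{x} \;=\; \frac{W(x)}{W(x)\,e^{W(x)}} \;=\; e^{-W(x)}.
\]
Thus $\frac{W(x)}{x} < \frac{1}{36} \Longleftrightarrow e^{-W(x)} < \frac{1}{36} \Longleftrightarrow W(x) > \log(36)$.

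Next I would invert this via the monotonicity of $W$. By Property~\ref{prop:two}, $W$ is strictly increasing on $[0,\infty)$, and by Property~\ref{prop:zero} we have $W\bigl(\log(36)\cdot e^{\log(36)}\bigr) = \log(36)$, i.e.\ $W(36\log 36) = \log(36)$. Consequently, $W(x) > \log(36) \Longleftrightarrow x > 36\log(36)$.

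Finally I substitute back $x = \lambda T/c$ to obtain $\lambda T/c > 36\log(36)$, which is equivalent to $T > \frac{36\, c\,\log(36)}{\lambda}$, establishing the claim. The argument is essentially a one-line chain of equivalences once the identity $W(x)/x = e^{-W(x)}$ is recognized; the only thing to be mildly careful about is ensuring $x > 0$ (which is automatic since $\lambda, T, c > 0$) so that Property~\ref{prop:two} applies and $W(x)$ is positive, legitimizing the division step. There is no real obstacle here — the key observation is just the rewriting $W(x)/x = e^{-W(x)}$, after which the equivalence follows from the strict monotonicity of $W$.
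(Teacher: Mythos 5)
Your proof is correct and follows essentially the same route as the paper: both rewrite $W(x)/x$ as $e^{-W(x)}$ via the defining identity, reduce the condition to $W(\lambda T/c) > \log(36)$, and then invert using the monotonicity of $W$ (equivalently, of $x \mapsto xe^{x}$) to obtain $\lambda T/c > 36\log(36)$. No substantive difference.
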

\begin{proof}
We have the following equivalent inequalities.
\begin{align*}
   &\; \frac{W(\lambda T /c)}{\lambda T/c}  < \frac{1}{36}
    \\ \Longleftrightarrow &\; e^{-W(\lambda T /c)}  < \frac{1}{36} \;\;\;\;\;(\because W(x) e^{W(x)} = x)
    \\ \Longleftrightarrow &\; W(\lambda T /c) > \log(36)
     \\ \Longleftrightarrow &\; \frac{\lambda T}{c} > \log(36) e^{\log(36)}
     \\ \Longleftrightarrow &\; T > \frac{36 c \log(36)}{\lambda }
  \end{align*}
  The second to last inequality is obtained by applying the monotone increasing function $f(x):=x e^x$ on both sides, and then using Definition \ref{def:lambert} of Lambert $W$ function.
\end{proof}

\begin{claim}
\label{clm:decreasing}
$e^{-cW(\lambda T / c)}$ is decreasing in $c$ for $c \in (0,1/2]$.
\end{claim}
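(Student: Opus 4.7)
The plan is to work with the logarithm $h(c) := \log g(c) = -c\,W(\lambda T/c)$ of $g(c) := e^{-cW(\lambda T/c)}$ and show directly that $h'(c) < 0$ on $(0,\infty)$, which immediately implies $g$ is strictly decreasing on $(0,1/2]$ (and in fact on the whole positive line).

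First I would compute $W'$ at an arbitrary $u>0$. Differentiating the defining identity $W(u)e^{W(u)} = u$ (Property~\ref{prop:zero}) yields
\begin{equation*}
W'(u)\,e^{W(u)}\bigl(1 + W(u)\bigr) \;=\; 1,
\end{equation*}
so $W'(u) = \dfrac{1}{e^{W(u)}(1+W(u))} = \dfrac{W(u)}{u(1+W(u))}$, where the second equality uses $e^{W(u)} = u/W(u)$. Note that since $\lambda T/c > 0$, Property~\ref{prop:two} gives $W(u) > 0$ and hence $1+W(u) > 0$, so $W'(u)$ is well defined and positive.

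Next I would substitute $u = \lambda T / c$, so that $du/dc = -u/c$, and compute
\begin{equation*}
h'(c) \;=\; -W(u) - c\,W'(u)\,\frac{du}{dc} \;=\; -W(u) + u\,W'(u) \;=\; -W(u) + \frac{W(u)}{1 + W(u)} \;=\; -\frac{W(u)^2}{1+W(u)}.
\end{equation*}
Because $W(u) > 0$ and $1+W(u) > 0$ for every $c>0$, the right-hand side is strictly negative. Hence $h$ is strictly decreasing on $(0,\infty)$, and therefore so is $g(c) = e^{h(c)}$; restricting to $c \in (0,1/2]$ yields the claim.

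The only mild subtlety is verifying positivity of $W(\lambda T/c)$ on the relevant range of $c$, which follows immediately from Property~\ref{prop:two} since $\lambda T/c > 0$; no other delicate estimate is needed. I do not expect any serious obstacle, as the argument is essentially a one-line differentiation once the formula $W'(u) = W(u)/[u(1+W(u))]$ is in hand.
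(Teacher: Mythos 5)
Your proof is correct, and it takes a genuinely different route from the paper's. The paper argues by a two-point comparison: for $c_1 > c$ it chains together $W(\lambda T/c) > W(\lambda T/c_1)$ (monotonicity of $W$), the identity $e^{-W(x)} = W(x)/x$, and a rescaling by $\lambda T$ to conclude $cW(\lambda T/c) < c_1 W(\lambda T/c_1)$ directly, with no calculus beyond the monotonicity of $W$ already recorded as Property~\ref{prop:two}. You instead differentiate: using $W'(u) = W(u)/[u(1+W(u))]$ you get $\frac{d}{dc}\bigl(-cW(\lambda T/c)\bigr) = -W(u)^2/(1+W(u)) < 0$ with $u = \lambda T/c$, which is a clean one-line computation and in fact establishes strict decrease on all of $(0,\infty)$, not just $(0,1/2]$. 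The derivative formula you need is essentially the same one the paper derives in its proof of Property~\ref{prop:one}, so nothing external is being smuggled in; the paper's version just keeps this particular claim free of differentiation. One cosmetic point: the identity $W(u)e^{W(u)} = u$ that you differentiate is the paper's Definition~\ref{def:lambert} rather than Property~\ref{prop:zero} (which states the inverse form $W(xe^x)=x$), so you should adjust that citation.
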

\begin{proof}

  For $c_1>c$, we have
  \begin{align*}
      &\; \lambda T/c  > \lambda T/c_1
      \\ \Longleftrightarrow &\; W(\lambda T/c)  > W(\lambda T/c_1) \;\;\;(\text{Property } \ref{prop:two} \text{ of Lambert $W$})
      \\ \Longleftrightarrow &\; e^{-W(\lambda T/c)}  < e^{-W(\lambda T/c_1)}
      \\ \Longleftrightarrow &\; \frac{W(\lambda T/c)}{\lambda T/c}  < \frac{W(\lambda T/c_1)}{\lambda T/c_1} \;\;\;(\because W(x) e^{W(x)} = x)
      \\ \Longleftrightarrow &\; c W(\lambda T/c)  < c_1 W(\lambda T/c_1)
      \\ \Longleftrightarrow &\; e^{- c W(\lambda T/c)}  > e^{- c_1 W(\lambda T/c_1)}
  \end{align*}
\end{proof}
\section{Properties of Lambert W function}
\label{app:prop}
\PropZero*
\begin{proof}
	The forward direction is straightforward. As $W(\cdot)$ is one to one function in the non-negative domain,  we have $W(W(x)e^{W(x)}) =  W(x)$. Let $y = W(x)$ then we have $W(ye^{y}) = y$. To show that $W(xe^{x}) = x$ implies $W(z)e^{W(z)} = z$, observe that $W(W(z_0)e^{W(z_0)}) = W(z_0)$.  We get the required result by taking the inverse.
\end{proof}

\PropOne*
\begin{proof}
	By definition, the Lambert $W$ function satisfies $W(x)e^{W(x)} = x$. It is easy to see that $W(e) = 1$. Further we have,
	\begin{align*}
	1 & = \frac{dW(x)}{dx} \cdot e^{W(x)} + \frac{dW(x)}{dx} \cdot e^{W(x)} W(x)  \\
	& = \frac{dW(x)}{dx} (x + e^{W(x)}) \\
	\implies  \frac{dW(x)}{dx} &= \frac{1}{x + e^{W(x)}}
	\end{align*}
	Let $f(x) = \log(x) - W(x)$. We have that $f(e) = 0$. We have that $\frac{df(x)}{dx} = \frac{1}{x} - \frac{1}{x + e^{W(x)}} = \frac{e^{W(x)}}{x(x + e^{W(x)})} = \frac{1}{x(1+W(x))} > 0$. Hence we have that $f(\cdot)$ is increasing i.e. $f(x) >0$ for all $x > e$. This shows that $W(x) \leq \log(x)$.

	Now, let $g(x)=\frac{\log(x)}{2}-W(x)$. Here, we have $g(e)<0$, and $\frac{dg(x)}{dx} = \frac{1}{2x}- \frac{1}{x + e^{W(x)}} = \frac{e^{W(x)}-e^{\log(x)}}{2x(x + e^{W(x)})} \leq 0$ (since $W(x)\leq \log(x)$ for $x\geq e$).
	So, $g(x)<0$ for all $x \geq e$, implying that $\frac{\log(x)}{2}<W(x)$.
	This completes the proof.
\end{proof}
\PropTwo*

\begin{proof}
	Observe that $W(0) = 0$. Note that in the non-negative domain, $ f(x) =xe^{x}$ is continuous, one to one and strictly increasing. Hence, its inverse, $W(\cdot)$,  is also increasing.
\end{proof}

\newpage
\section{Validating Arm Arrival Distributions}
\label{app:simulation}


In each subfigure, X-axis represents the number of months elapsed since the first posted review for the corresponding product, and Y-axis represents the number of reviews.

\begin{figure}[htb]
\centering
	\includegraphics[width= \linewidth]{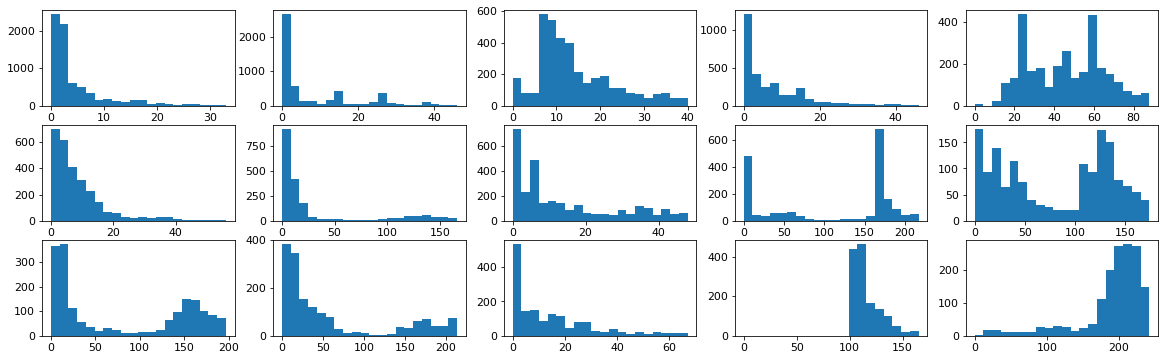}
	\caption{Arrival distribution of reviews for representative CDs \& vinyl products on Amazon}
\label{fig:CDsVinyl}
\end{figure}

\begin{figure}[htb]
\centering
	\includegraphics[width= \linewidth]{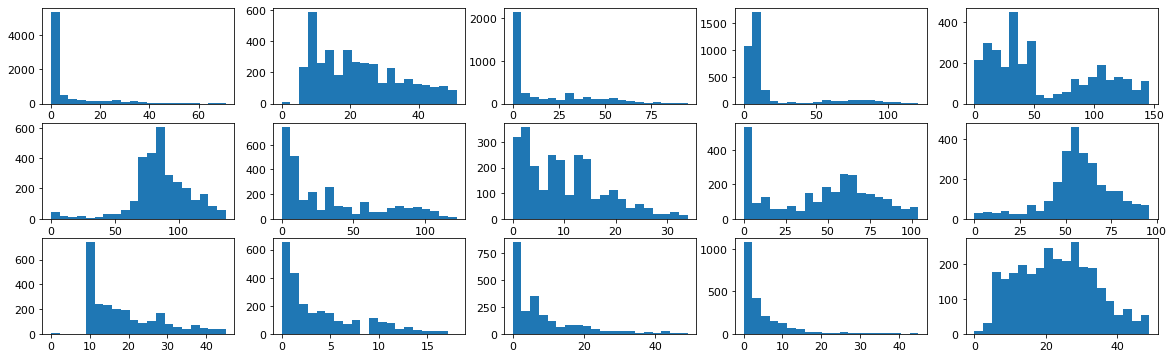}
	\caption{Arrival distribution of reviews for representative video game products on Amazon}
	\label{fig:videogames}
\end{figure}
\begin{figure}[htb]
\centering
	\includegraphics[width= \linewidth]{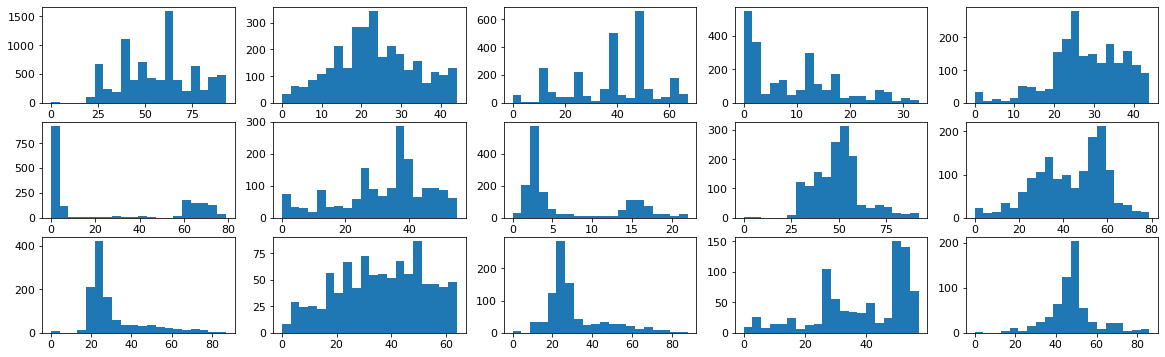}
	\caption{Arrival distribution of reviews for representative gift card products on Amazon}
\label{fig:giftCards}
	\vspace{-1mm}
\end{figure}

\end{document}